
\documentclass{article}

\usepackage{microtype}
\usepackage{graphicx}
\usepackage{subfigure}
\usepackage{booktabs} 

\usepackage{hyperref}



\usepackage[accepted]{icml2023}

\usepackage{amsmath}
\usepackage{amssymb}
\usepackage{mathtools}
\usepackage{amsthm}

\usepackage{amsmath,amsfonts,bm}









\def\eqref#1{equation~\ref{#1}}









\def\1{\bm{1}}

\def\R{{R}}









\def\va{{\bm{a}}}
\def\vb{{\bm{b}}}

\def\vd{{\bm{d}}}

\def\vg{{\bm{g}}}
\def\vh{{\bm{h}}}

\def\vk{{\bm{k}}}

\def\vm{{\bm{m}}}

\def\vp{{\bm{p}}}
\def\vq{{\bm{q}}}

\def\vs{{\bm{s}}}

\def\vu{{\bm{u}}}
\def\vv{{\bm{v}}}

\def\vx{{\bm{x}}}
\def\vy{{\bm{y}}}
\def\vz{{\bm{z}}}



\def\mI{{\bm{I}}}

\def\mM{{\bm{M}}}

\def\mO{{\bm{O}}}

\def\mT{{\bm{T}}}

\def\mW{{\bm{W}}}

\def\mZ{{\bm{Z}}}

\DeclareMathAlphabet{\mathsfit}{\encodingdefault}{\sfdefault}{m}{sl}
\SetMathAlphabet{\mathsfit}{bold}{\encodingdefault}{\sfdefault}{bx}{n}


\def\gE{{\mathcal{E}}}

\def\gG{{\mathcal{G}}}

\def\gV{{\mathcal{V}}}



\def\sR{{\mathbb{R}}}











\newcommand{\eg}{\emph{e.g.}}
\newcommand{\ie}{\emph{i.e.}}

\newcommand{\name}{\textsc{Set}}
\newcommand{\Solar}{\textsc{Solar}}
\newcommand{\Amorpheus}{\textsc{Amorpheus}}
\newcommand{\SMP}{{SMP}}
\newcommand{\SWAT}{{SWAT}}
\newcommand{\Monolithic}{Monolithic}
\usepackage{multirow} 
\usepackage[capitalize,noabbrev]{cleveref}
\newcommand{\tabincell}[2]{\begin{tabular}{@{}#1@{}}#2\end{tabular}}  
\theoremstyle{plain}
\newtheorem{theorem}{Theorem}[section]

\newtheorem{corollary}[theorem]{Corollary}
\theoremstyle{definition}
\newtheorem{definition}[theorem]{Definition}

\theoremstyle{remark}

\usepackage[textsize=tiny]{todonotes}

\icmltitlerunning{Subequivariant Graph Reinforcement Learning in 3D Environments}

\begin{document}

\twocolumn[
\icmltitle{Subequivariant Graph Reinforcement Learning in 3D Environments}



\icmlsetsymbol{equal}{*}

\begin{icmlauthorlist}
\icmlauthor{Runfa Chen}{equal,thu}
\icmlauthor{Jiaqi Han}{equal,thu}
\icmlauthor{Fuchun Sun}{thu,bosch}
\icmlauthor{Wenbing Huang}{ruc,bdmam}
\end{icmlauthorlist}

\icmlaffiliation{thu}{Dept. of Comp. Sci. \& Tech., Institute for AI, BNRist Center, Tsinghua University}
\icmlaffiliation{bosch}{THU-Bosch JCML Center} 
\icmlaffiliation{ruc}{Gaoling School of Artificial Intelligence, Renmin University of China}
\icmlaffiliation{bdmam}{Beijing Key Laboratory of Big Data Management and Analysis Methods}


\icmlkeywords{Machine Learning, ICML}

\vskip 0.3in
]



\printAffiliationsAndNotice{\icmlEqualContribution} 

\begin{abstract}
Learning a shared policy that guides the locomotion of different agents is of core interest in Reinforcement Learning (RL), which leads to the study of morphology-agnostic RL. However, existing benchmarks are highly restrictive in the choice of starting point and target point, constraining the movement of the agents within 2D space. In this work, we propose a novel setup for morphology-agnostic RL, dubbed Subequivariant Graph RL in 3D environments (3D-SGRL). Specifically, we first introduce a new set of more practical yet challenging benchmarks in 3D space that allows the agent to have full Degree-of-Freedoms to explore in arbitrary directions starting from arbitrary configurations. Moreover, to optimize the policy over the enlarged state-action space, we propose to inject geometric symmetry, \ie, subequivariance, into the modeling of the policy and Q-function such that the policy can generalize to all directions, improving exploration efficiency. This goal is achieved by a novel SubEquivariant Transformer (\name{}) that permits expressive message exchange. Finally, we evaluate the proposed method on the proposed benchmarks, where our method consistently and significantly outperforms existing approaches on single-task, multi-task, and zero-shot generalization scenarios. Extensive ablations are also conducted to verify our design.
\end{abstract}

\section{Introduction}

\begin{figure}[t!]
\centering
\includegraphics[width=\linewidth]{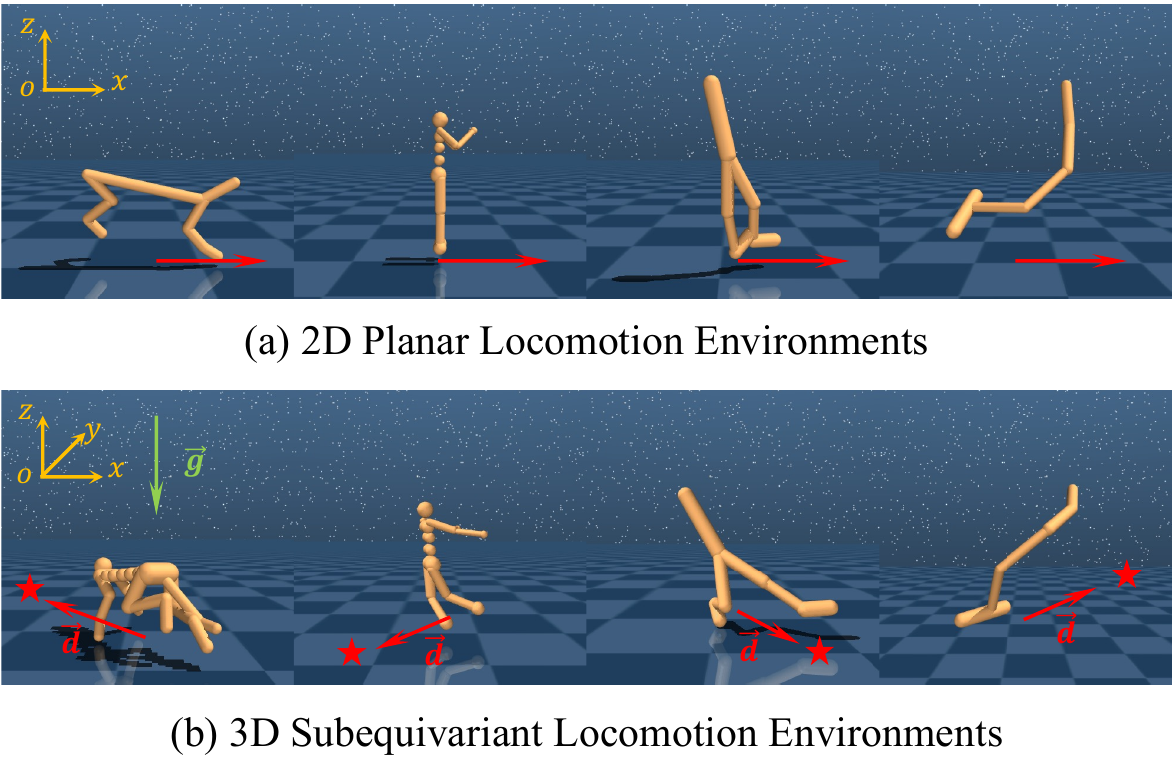}
\vspace{-.32in}
\caption{Illustrative comparison between previous 2D planar setting and our 3D subequivariant formulation. Notably, the agents in (b) are equipped with more DoFs to allow 3D movement.  Code and videos are available on our project page: \href{https://alpc91.github.io/SGRL/}{https://alpc91.github.io/SGRL/}.}
\label{fig:teaser}
\vspace{-.2in}
\end{figure}

Learning to locomote, navigate, and explore in the 3D world is a fundamental task in the pathway of building intelligent agents. Impressive breakthrough has been made towards realizing such intelligence thanks to the emergence of deep reinforcement learning (RL)~\cite{mnih2015human,silver2016mastering,mnih2016asynchronous,schulman2017proximal,fujimoto2018addressing}, where the policy of the agent is acquired through interactions with the environment. More recently, by getting insight into the morphology of the agent, morphology-agnostic RL~\cite{wang2018nervenet,pathak2019learning,huang2020one,kurin2020cage,hong2021structure,dong2022low,trabucco2022anymorph,gupta2022metamorph,furuta2023asystem} has been proposed with the paradigm of learning a local and shared policy for all agents and the tasks involved, offering enhanced performance and transferability, especially in the multi-task scenario. It is usually fulfilled by leveraging Graph Neural Networks (GNNs)~\cite{battaglia2018relational} or even Transformers~\cite{vaswani2017attention} to derive the policy through passing and fusing the state information on the morphological graphs of the agents.

In spite of the fruitful progress by morphology-agnostic RL, in this work, we identify several critical setups that have been over-simplified in existing benchmarks, giving rise to a limited state/action space such that the obtained policy is unable to explore the entire 3D space. In particular, the agents are assigned a fixed starting point and restricted to moving towards a single direction along the $x$-axis, leading to 2D motions only. Nevertheless, in a more realistic setup as depicted in \Cref{fig:teaser}, the agents would be expected to have full Degree-of-Freedoms (DoFs) to turn and move in arbitrary directions starting from arbitrary configurations. To address the concern, we extend the existing environments to a set of new benchmarks in 3D space, which meanwhile introduces significant challenges to morphology-agnostic RL due to the massive enlargement of the state-action space for policy optimization. 

Optimizing the policy in our new setup is prohibitively difficult, and existing morphology-agnostic RL frameworks like~\citep{huang2020one,hong2021structure} are observed to be susceptible to getting stuck in the local minima and exhibited poor generalization in our experiments. To this end, we propose to inject geometric symmetry~\cite{cohen2016group, cohen2016steerable, worrall2017harmonic,van2020mdp} into the design of the policy network to compact the space redundancy in a lossless way~\cite{van2020mdp}. In particular, we restrict the policy network to be subequivariant in two senses~\cite{han2022learning}: 1. the output action will rotate in the same way as the input state of the agent; 2. the equivariance is partially relaxed to take into account the effect of gravity in the environment. We design SubEquivariant Transformer (\name{}) with a novel architecture that satisfies the above constraints while also permitting expressive message propagation through self-attention. Upon \name{}, the action and Q-function could be obtained with desirable symmetries guaranteed. We term our entire task setup and methodology as Subequivariant Graph Reinforcement Learning in 3D Environments (3D-SGRL).

Our contributions are summarized as follows:
\begin{itemize}
    \vspace{-.1in}
    \item We introduce a set of more practical yet highly challenging benchmarks for morphology-agnostic RL, where the agents are permitted to turn and move in the 3D environments with arbitrary starting configurations and arbitrary target directions. For this purpose, we redesign the agents in current benchmarks by equipping them with more DoFs in a considerate way.  
    \vspace{-.1in}
    \item To effectively optimize the policy on such challenging benchmarks, we propose to enforce the policy network with geometric symmetry. We introduce a novel architecture dubbed \name{} that captures the rotation/translation equivariance particularly when external force fields like gravity exist in the environment.
    \vspace{-.1in}
    \item We verify the performance of the proposed method on the proposed 3D benchmarks, where it outperforms existing morphology-agnostic RL approaches by a significant margin in various scenarios, including single-task, multi-task, and zero-shot generalization. Extensive ablations also reveal the efficacy of the proposed ideas.
    \vspace{-.1in}
\end{itemize}

\section{Background}\label{sec:background}

\paragraph{Morphology-Agnostic RL} In the context of morphology-agnostic RL~\citep{huang2020one}, we are interested in an environment with $N$ agents (\emph{a.k.a} tasks), where the $n$-th agent comprises $K_n$ limbs that control its motion. At time $t$, each limb $k\in\{1,\cdots,K_n\}$ of agent $n$ receives a state $\vs_{n,k}(t)\in \sR^d$ and outputs a torque $a_{n,k}(t) \in [-1,1]$ to its actuator. As a whole, agent $n$ executes the joint action $\va_n(t)=\{a_{n,k}(t)\}_{k=1}^{K_n}$ to interact with the environment which will return the next state of all limbs $\vs_n(t+1)=\{\vs_{n,k}(t+1)\}_{k=1}^{K_n}$ and a reward $r_n(\vs_n(t),\va_n(t))$ for agent $n$. The goal of morphology-agnostic RL is to learn a shared policy $\pi_\theta$ among different agents to maximize the expected return:
\begin{equation}
    \mathcal{J}(\theta)=\mathbb{E}_{\pi_\theta} \sum_{n=1}^{N}\sum_{t=0}^\infty\left[\gamma^t r_{n}(\vs_n(t),\va_n(t))\right],
    \label{eq:policy}
\end{equation}
where $\va_{n}(t)=\pi_\theta(\vs_{n}(t))$, $\gamma$ is a discount factor, and $\theta$ consists of trainable parameters.

The objective in \Cref{eq:policy} is usually optimized via the actor-critic setup of the deterministic policy gradient algorithm for continuous control~\cite{lillicrap2015continuous}, which estimates the Q-function for agent $n$:
\begin{equation}
\label{eq:Q}
\aligned
    Q_{\pi_\theta}(\vs_{n},\va_{n})=\mathbb{E}_{\pi_\theta} \sum_{t=0}^\infty&[\gamma^t r_n(\vs_n(t),\va_n(t))|\\
    &\vs_{n}(0)=\vs_{n},\va_{n}(0)=\va_{n}].
\endaligned
\end{equation}

To uniformly learn a shared policy across all agents and tasks, previous methods~\citep{wang2018nervenet,pathak2019learning,huang2020one,kurin2020cage,hong2021structure,dong2022low},  
take into account the interaction of connected limbs and joints, and view the morphological structure of the agent as an undirected graph $\gG=(\gV,\gE)$, where each $v_i\in\gV$ represents a limb and the edge $(v_i, v_j)\in\gE$ stands for the joint connecting limb $i$ and $j$\footnote{For simplicity, we omit the index $n$ and $t$ henceforth in the above notations of agent $n$ at time $t$, since all agents share the same model for all time, \eg, $\vs_{n}(t)\rightarrow\vs$ and $\va_{n}(t)\rightarrow\va$.}. A graph neural network $\varphi_\theta$ is then employed to instantiate the policy $\pi_\theta$, which predicts the action $\va$ given the state of all limbs $\vs$ and the graph topology $\gE$ as input, \ie,
 \begin{align}
 \label{eq:graphrl}
     \va = \varphi_\theta\left(\vs, \gE \right).
 \end{align}

\paragraph{Equivariance and Subequivariance}
To further relieve the difficulty of learning a desirable policy within the massive search space formed by the states and actions of the agent in 3D space, we propose to encode the physical geometric symmetry of the policy learner $\varphi_\theta$, so that the learned policy can generalize to operations in 3D, including rotations, translations, and reflections, altogether forming the group of E($3$). Such constraint enforced on the model is formally described by the concept of \emph{equivariance}~\cite{thomas2018tensor,fuchs2020se3,villar2021scalars,satorras2021en,huang2022equivariant,han2022learning,han2022geometrically}.

\begin{definition}[E($3$)-equivariance]
    Suppose $\Vec{\mZ}$ to be 3D geometric vectors (positions, velocities, etc) that are steerable by E(3) transformations, and $\vh$  non-steerable features.
    The function $f$ is E($3$)-equivariant, if for any transformation $g\in\text{E}(3)$, $f(g\cdot\Vec{\mZ},\vh)=g\cdot f(\Vec{\mZ},\vh)$, $\forall\Vec{\mZ}\in\sR^{3\times m}, \vh\in\sR^{d}$.  Similarly, $f$ is invariant if $f(g\cdot\Vec{\mZ},\vh)= f(\Vec{\mZ},\vh)$. 
\end{definition}

Built on this notion,~\citet{han2022learning} additionally considers equivariance on the subgroup of $\text{O}(3)$, induced by the external force $\vec{\vg}\in\sR^3$ like gravity, defined as $\text{O}_{\vec\vg}(3)\coloneqq\{\mO\in\sR^{3\times 3}|\mO^\top\mO=\mI, \mO\vec\vg=\vec\vg  \}$. By this means, the symmetry is only restrained to the rotations/reflections along the direction of $\vec\vg$. Such relaxation of group constraint is crucial in environments with gravity, as it offers extra flexibility to the model so that the effect of gravity could be captured.~\citet{han2022learning} also presented a universally expressive construction of the $\text{O}_{\vec\vg}(3)$-equivariant functions:
\begin{align}
\label{eq:subGMN}
\begin{aligned}
    &f_{\vec{\vg}}(\Vec{\mZ},\vh)=[\Vec{\mZ}, \vec{\vg}]\mM_{\vec{\vg}},\\
    &\text{s.t.}\quad \mM_{\vec{\vg}}=  \sigma([\Vec{\mZ},\vec{\vg}]^{\top}[\Vec{\mZ},\vec{\vg}],\vh),
\end{aligned}
\end{align}
where $\sigma\left(\cdot\right)$ is an Multi-Layer Perceptron (MLP) and $[\Vec{\mZ}, \vec{\vg}]\in\sR^{3\times (m+1)}$ is a stack of $\vec\mZ$ and $\vec\vg$ along the last dimension. In particular, $f$ will reduce to be $\text{O}(3)$-equivariant if $\vec\vg$ is omitted in the computation. In this way, $f_{\vec\vg}$ can then be leveraged in the message passing process of the graph neural network $\varphi_\theta$ in \Cref{eq:graphrl} to obtain desirable geometric symmetry. 
\section{Our task and method: 3D-SGRL}\label{sec:method}

In this section, we present our novel formulation for morphology-agnostic RL, dubbed Subequivariant Graph Reinforcement Learning in 3D Environments (3D-SGRL). We first elaborate on the extensions made to the environment in \Cref{sec3.1}, then introduce our entire framework, consisting of an input processing module (\Cref{sec3.2}), a novel SubEquivariant Transformer (\name{}) for expressive information passing and fusion (\Cref{sec3.3}), and output modules of actor and critic to obtain the final policy and Q-function (\Cref{sec3.4}).

\begin{figure*}[t!]
\centering
\includegraphics[width=\linewidth]{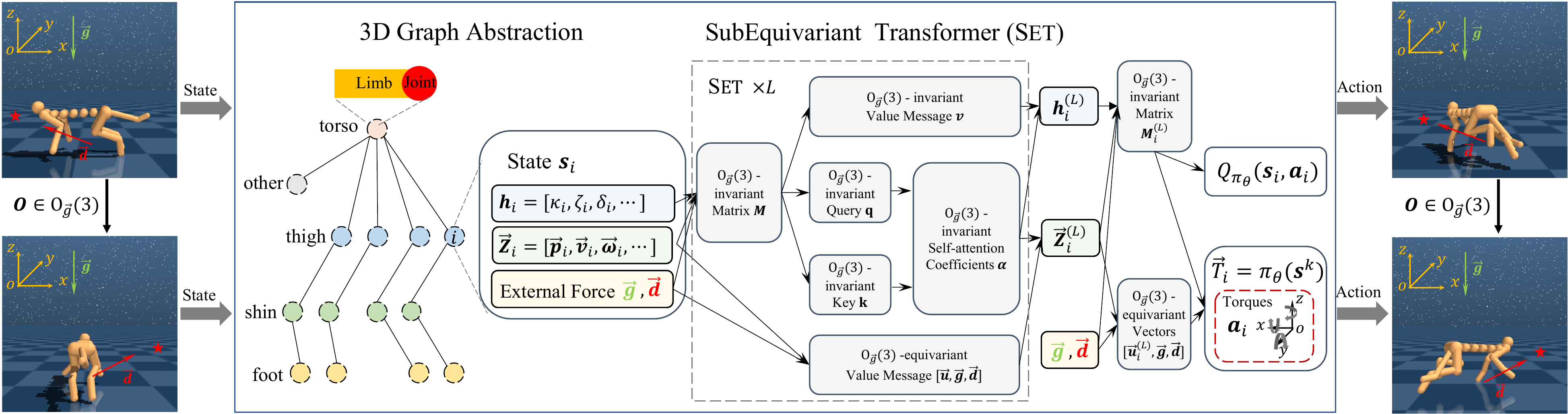}
\vspace{-.32in}
\caption{The flowchart of our 3D-SGRL. The states of the agents are processed into $\vh_i$ and $\vec{\mZ}_i$ for each limb $i$, and are updated by $L$ layers of our proposed SubEquivariant Transformer. The actor and critic are finally obtained, which are guaranteed to preserve the geometric symmetry for guiding the agent in arbitrary directions. There is no weight sharing between actor $\pi_\theta$ and critic $Q_{\pi_\theta}$.}
\label{fig:main}
\vspace{-.15in}
\end{figure*}

\begin{table}[htbp]
\setlength\tabcolsep{3pt}
    \centering
    \small
    \caption{Comparison in the problem setup.}
    \label{tab:environment_vs}
    \begin{tabular}{lllll}
    \toprule
    & &\textbf{2D-Planar}&\textbf{Our 3D-SGRL}\\
     \midrule
     \multirow{3}{*}{\tabincell{l}{State Space}} & Range & $xoz$-plane & 3D space\\
     & Initial  &  $x^+$-axis & Arbitrary direction \\
     & Target  & $x^+$-axis & Arbitrary direction\\
    \midrule
    \multirow{2}{*}{\tabincell{l}{Action Space}}&  \# Actuators  & 1 per joint & 3 per joint\\
     & DoF & 1 per joint & 3 per joint\\
     \midrule
    \multirow{2}{*}{\tabincell{l}{Symmetry}}&  External Force  & NULL & Gravity $\vec{\vg}$, Target $\vec{\vd}$ \\
     & Group & $\emptyset$ &  $\text{O}_{\vec{\vg}}$(3)\\
    \bottomrule
    \end{tabular}
    \label{tab:compare}
\end{table}

\subsection{From 2D-Planar to 3D-SGRL}
\label{sec3.1}

A core mission of developing RL algorithms is enabling the agent (\eg, a robot) to learn to move in the environment with a designated goal. Ideally, the exploration should happen in the open space where the agent is able to move from the arbitrary starting point, via arbitrary direction, towards an arbitrary destination, offering much flexibility which highly corresponds to how the robot walks/runs in the real world. However, in the widely acknowledged setup in existing morphology-agnostic RL literature~\cite{huang2020one,kurin2020cage,hong2021structure,dong2022low}, the agents are unanimously restricted in the fixed choice of starting position, target direction, and even the Degree-of-Freedom (DoF) of each joint in the action space. We summarize the limitations of the existing setup, which we dub \emph{2D-Planar}, and compare it with our introduced 3D-SGRL in \Cref{tab:compare} in three aspects, including state space, action space, and the consideration of geometric symmetry.

\paragraph{State Space} In the 2D-Planar setup, all positions of the limbs are projected onto the $xoz$-plane, and the agent is always initialized to face the positive $x$-axis. The agent is also designated to move in the same direction as it is initialized, lacking many vital movements, \eg, turning, that an agent is supposed to learn. In our 3D-SGRL environment, all agents are initialized randomly in the full 3D space, facing a random direction, with the goal of moving towards a random destination. This setup is more like a comprehensive navigation task, which brings significant challenges by permitting an input/output state space with much higher complexity.

\paragraph{Action Space} For a more detailed granularity, our 3D-SGRL also expands the action space that offers the agent more flexibility to explore and optimize the policy on this challenging task. Specifically, the number of actuators is increased from only 1 on each joint in 2D-Planar to 3 per joint, which implies the DoF on each joint is also enlarged from 1 to 3 correspondingly.

\paragraph{Geometric symmetry} Since both the state space and action space have been enormously augmented, the functional complexity of the policy network $\varphi_\theta$ in \Cref{eq:graphrl} scales geometrically in correspondence. This poses a unique challenge, especially in RL, where the skills of the agent are gradually obtained through abundant explorations in the environments. During the learning process, the optimization of $\varphi_\theta$ becomes highly vulnerable to getting stuck in local minima, and searching for a good policy within the large space would be notoriously difficult. To tackle this challenge, we propose to take advantage of the geometric symmetry in the environments by enforcing it as a constraint in the design of $\varphi_\theta$. In particular, we construct $\varphi_\theta$ to be an $\text{O}_{\vec\vg}(3)$-equivariant function, which ensures that the policy learned in each direction can generalize seamlessly to arbitrary direction rotated along the gravity axis. Instead of $\text{O}(3)$, we resort to subequivariant $\text{O}_{\vec\vg}(3)$ to empower the model such that the effect of gravity reflecting in the policy can be well captured. By contrast, existing morphology-agnostic RL works lack the consideration of geometric symmetry, leading to poor performance in a real and more challenging setup like 3D-SGRL. In addition to gravity, we have a target direction $\vec{\vd}\in\sR^3$ that is steerable and acts like an attracted force guiding the agent towards expected destinations. The task guidance is not explicitly specified in the previous 2D-Planar setting but comes as an indispensable clue in our 3D-SGRL tasks. 

\subsection{Input Processing}\label{sec3.2}
To fulfill the constraint in geometric symmetry, we need to subdivide the state $\vs_i$ into the directional geometric vectors $\vec{\mZ}_i$ and the scalar features $\vh_i$ for each node $i\in\{1,\cdots,|\gV|\}$ in the morphological graph $\gG$ of the agent. Quantities in $\vec\mZ_i$ will rotate in accordance with the transformation $g\in\text{O}_{\vec\vg}(3)$ while those in $\vh_i$ remain unaffected. To be specific, for our 3D environments generated by MuJoCo~\cite{todorov2012mujoco}, the vectors in $\vec\mZ_i\in\R^{3 \times 6}$ include the position $\vec{\vp}_i\in\sR^3$, 
the positional velocity $\vec{\vv}_i\in\sR^3$, 
the rotational velocity $\vec{\bm{\omega}}_i\in\sR^3$, 
joint rotation $x$-axis $\vec{\vx}_i\in\sR^3$, 
joint rotation $y$-axis $\vec{\vy}_i\in\sR^3$, and
joint rotation $z$-axis $\vec{\vz}_i\in\sR^3$. 
The values in $\vh_i \in \sR^{13} $ consist of the rotation angles $\kappa_i$, $\zeta_i$,  $\delta_i$ of joint $x$-axis, $y$-axis, and $z$-axis, respectively, and their corresponding ranges as well as the type of limb, which is a 4-dimensional one-hot vector representing ``torso'', ``thigh'', ``shin'', ``foot'' and ``other'' respectively. As mentioned before, we have a target direction $\vec{\vd}$ apart from $\vec{\mZ}_i$ and $\vh_i$. Specifically,  $\vec{\vd}\coloneqq [\frac{\vec{\vp}^{xy}-\vec{\vp}_1^{xy}}{\Vert\vec{\vp}^{xy}-\vec{\vp}_1^{xy}\Vert_2},0]$, where $\vec{\vp}^{xy}$ is the $xy$ coordinate of the assigned target and $\vec{\vp}_1^{xy}$ is the $xy$ coordinate of limb 1 (torso), each of which is in $\sR^2$, and the resulting $\vec\vd \in \sR^3$.

\subsection{SubEquivariant Transformer (\name{})}\label{sec3.3}

Given the states encoded in $\vec{\mZ}_i$ and $\vh_i$, $i\in\{1,\cdots,|\gV|\}$, we are still in demand of a highly expressive $\varphi_\theta$ to learn the policy while ensuring the subequivariance. To this end, we present a novel architecture \name{}, to conduct effective message fusion between the limbs and joints, where the attention module is carefully designed to meet the symmetry. 

In particular, our \name{} processes the following operations in each computation.
\begin{align}
    \label{eq:z}
    \vh^{(0)}_i&=[\vh_i, \vec{\vp}_i^z], \\
    \vec{\mZ}^{(0)}_i &=  \vec{\mZ}_i\ominus\vec{\mZ}_1\coloneqq[\vec{\vp}_i-\vec{\vp}_1,\vec{\vv}_i,\vec{\bm{\omega}}_i,\vec{\vx}_i,\vec{\vy}_i,\vec{\vz}_i],
\end{align}
where, the binary operation ``$\ominus$'' transforms the input positions into translation invariant representations by subtracting $\vec{\vp}_1$, the position of the node with index $1$, \ie, the torso limb; $\vec{\vp}_i^z$ is the projection of the coordinate $\vec{\vp}_i$ to the $z$-axis, which is indeed the relative height of node $i$ when taking the ground as reference. The superscript $0$ indicates the processed input.

In the next step, we derive an $\text{O}_{\vec{\vg}}(3)$-invariant matrix $\mM_i \in \sR^{m \times m}$ as the value matrix in self-attention. Formally,
\begin{align}
\label{eq:M}
\mM^{(l)}_i=\sigma_\mM\left(\sigma_{\vec\vm}\left([\vec{\vm}^{(l)}_i,\vec\vg,\vec{\vd}]^\top[\vec{\vm}^{(l)}_i,\vec\vg,\vec{\vd}]\right),\vh^{(l)}_i\right),
\end{align}
where $\vec{\vm}^{(l)}_i=\vec{\mZ}^{(l)}_i \mW^{(l)}_{\vec{\vm}}$ is a mixing of the vectors in $\vec{\mZ}^{(l)}_i$ to capture the interactions between channels, with a learnable weight matrix $\mW^{(l)}_{\vec{\vm}}$; the concatenation with $\vec\vg$ and $\vec\vd$, and the inner product operation follow the practice in \Cref{eq:subGMN}; $\sigma_{\vec{\vm}}$ and $\sigma_\mM$ are two separate MLPs, and the superscript $l$ indexes the layer number.

With the value matrix $\mM_i$, we compute the self-attention coefficients $\alpha_{ij} \in \sR^{|\gV| \times |\gV|}$ between all pairs of node $i$ and $j$, by deriving the $\text{O}_{\vec{\vg}}(3)$-invariant query and key:
\begin{align}
    \vq_i^{(l)} &= \mW_q^{(l)}\text{vec}(\mM_i^{(l)})+\vb_q^{(l)},\\
    \vk_i^{(l)} &= \mW_k^{(l)}\text{vec}(\mM_i^{(l)})+\vb_k^{(l)},\\
    \alpha_{ij}^{(l)}&=\frac{\exp(\vq_i^{(l)\top}\vk^{(l)}_j)}{\sum_{m}\exp(\vq^{(l)\top}_i\vk^{(l)}_{m})},
\end{align}
where $\text{vec}(\cdot)$ is a column vectorization function of matrix: $\sR^{m\times m}\mapsto\sR^{mm \times 1}$, $\mW_q^{(l)},\mW_k^{(l)} \in \sR^{mm \times mm}$ are the learnable weights and $\vb^{(l)}_q,\vb^{(l)}_k \in \sR^{mm \times 1}$ are the biases in the $l$-th layer.

Finally, the $\text{O}_{\vec\vg}(3)$-equivariant and invariant values are transformed by the attention coefficients $\alpha_{ij}$ and aggregated to obtain the updated information. In detail,
\begin{align}
\label{eq:mz}
\vec{\mZ}^{(l+1)}_i&=\vec{\mZ}^{(l)}_i+\sum_j\left(\alpha_{ij}^{(l)}[\vec{\vu}_j^{(l)},\vec{\vg},\vec{\vd}]\right)\mW^{(l)}_{\vec{\mZ}},\\
\label{eq:h}
\vh^{(l+1)}_i&=\text{LN}\left(\vh^{(l)}_i+\mW^{(l)}_\vh\sum_j\left(\alpha^{(l)}_{ij}\vv^{(l)}_j\right)  + \vb^{(l)}_{\vh}\right),
\end{align}
where $\vec{\vu}^{(l)}_j=\vec{\mZ}^{(l)}_j \mW^{(l)}_{\vec{\vu}}$ is a mixing of the vectors in $\vec{\mZ}^{(l)}_j$ to capture the interactions between channels,  $\vv^{(l)}_j=\mW^{(l)}_{\vv} \text{vec}(\mM^{(l)}_j)+ \vb^{(l)}_{\vv}$ is a invariant value message, with learnable weight matrices $\mW^{(l)}_{\vec{\vu}}, \mW^{(l)}_{\vv}$ and the bias $\vb^{(l)}_{\vv}$, and $\text{LN}\left(\cdot\right)$ is the Layer Normalization~\cite{ba2016layer}. 

The operations are stacked over $L$ layers in total, resulting in the final architecture of~\name{}, with the full flowchart visualized in \Cref{fig:main}.

\begin{figure*}[t!]
\centering
\includegraphics[width=0.8\linewidth]{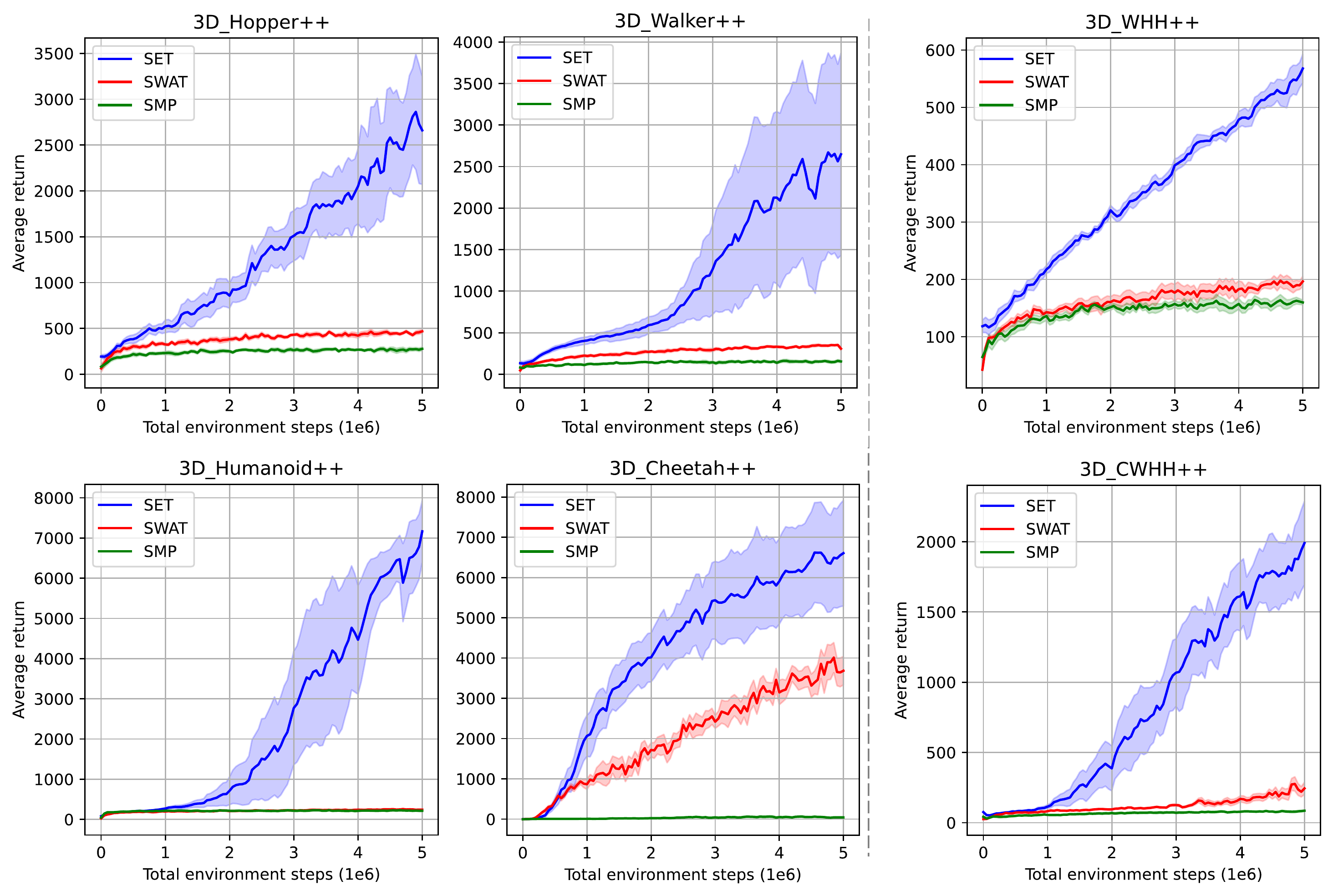}
\vspace{-.2in}
\caption{Multi-task performance of our method \name{} compared to the morphology-agnostic RL baselines: SWAT and SMP. Training curves on 6 collections of environments. 
The shaded area represents the standard error.}
\label{fig:multi-task}
\vspace{-.15in}
\end{figure*}

\subsection{Actor and Critic}\label{sec3.4}
With multiple layers of message fusion on the morphological graph of the agent, we are ready to output the actor policy $\pi_\theta$ and critic Q-function $Q_{\pi_\theta}$ to obtain the training objective of morphology-agnostic RL. Notably, the action in 3D-SGRL setting has been extended to be the three values of the torques projected onto the three rotation axes of each joint, driven by the actuators attached. This is attained by firstly reading out the subequivariant vector from the output of the $L$-th layer of our~\name{}, namely,
\begin{align}
\label{eq:readout}
    \vec{\mT}_i=[\vec{\vu}^{(L)}_i, \vec{\vg},\vec\vd]\sigma_{\mM}\left(\mM^{(L)}_i\right)\mW_{\vec{\mT}},
\end{align}
where $\vec{\vu}^{(L)}_i=\vec{\mZ}^{(L)}_i \mW^{(L)}_{\vec{\vu}}$ is a mixing of channels,  $[\vec{\vu}^{(L)}_i, \vec{\vg},\vec\vd]\in\sR^{3\times m'}$ is a stack of $\vec{\vu}^{(L)}_i$, $\vec\vg$ and $\vec\vd$ along the last dimension,  $\sigma_{\mM}$ is, again, an MLP: $\sR^{m\times m}\mapsto\sR^{m'\times m'}$, and $\mW_{\vec{\mT}} \in \sR^{m'\times 1}$ is a linear transformation.
Thanks to the $\text{O}_{\vec\vg}(3)$-equivariance of \name{} and the readout in \Cref{eq:readout}, the torque matrix $\vec{\mT}_i\in\sR^{3\times 1}$ is also $\text{O}_{\vec\vg}(3)$-equivariant. The scalars of the torques projected on three rotation axes of the joint are then naturally given by taking the inner products:
\begin{align}
\label{eq:action}
    \va_i\in\sR^{3}=[\vec{\mT}_i\cdot\vec{\vx}_i,\vec{\mT}_i\cdot\vec{\vy}_i,\vec{\mT}_i\cdot\vec{\vz}_i],
\end{align}
where $\va_i$ is the $\text{O}_{\vec\vg}(3)$-invariant output action of the actuators assigned to limb $i$.
By putting together all actions $\va_i$, $i\in\{1,\cdots,|\gV|\}$, the final output action $\va$ in \Cref{eq:graphrl} is collected.

The $\text{O}_{\vec\vg}(3)$-invariant Q-function $Q_{\pi_\theta}$ is similarly obtained by directly making use of the invariant $\mM^{(L)}_i$, given by,
\begin{align}
    Q_{\pi_\theta} = \mW_{Q_{\pi_\theta}}\text{vec}(\mM^{(L)}_i) + b_{Q_{\pi_\theta}},
\end{align}
where $\mW_{Q_{\pi_\theta}} \in \sR^{1 \times mm}, b_{Q_{\pi_\theta}} \in \sR$ collects the learnable weights and bias. Note that for learning actor policy $\pi_\theta$ and critic $Q_{\pi_\theta}$, we employ two separate~\name{}s, since for computing $Q_{\pi_\theta}$ we need to additionally concatenate the action $\va_i$ into the input of the first layer, \ie, $\vh^{(0)}_i=[\vh_i, \va_i]$. Here, we concatenate $\va_i$ to $\vh_i$ rather than $\mZ_i^{(0)}$ owing to the  $\text{O}_{\vec\vg}(3)$-invariance of $\va_i$. Formal proof of the equivariance of~\name{} and the invariance of the output action and critic are presented in \Cref{sec:proof}.

\section{Benchmark Construction}
In this section, we introduce technical details in constructing our challenging benchmarks in 3D-SGRL. 

\paragraph{Environments and Agents}
\label{sec:env-agent}
The environments in our 3D-SGRL are modified from the default 2D-planar setups in MuJoCo~\cite{todorov2012mujoco}.
Specifically, we extend agents in environments including \texttt{Hopper}, \texttt{Walker}, \texttt{Humanoid} and \texttt{Cheetah}~\cite{huang2020one} into 3D counterparts. 
For the multi-task training, we additionally construct several variants of each of these agents, as displayed in \Cref{tab:environments}. We create the following collections of environments with these variants, and categorize the collections into two settings: \emph{in-domain} and \emph{cross-domain}. 
For in-domain, there are four collections: (1) three variants of \texttt{3D Hopper} [\texttt{3D\_Hopper++}], (2) eight variants of \texttt{3D Walker} [\texttt{3D\_Walker++}], (3) eight variants of \texttt{3D Humanoid} [\texttt{3D\_Humanoid++}], (4) ten variants of \texttt{3D Cheetah} [\texttt{3D\_Cheetah++}].
The cross-domain environments are combinations of in-domain environments: (1) Union of \texttt{3D\_Walker++}, \texttt{3D\_Humanoid++} and \texttt{3D\_Hopper++} [\texttt{3D\_WHH++}], (2) Union of \texttt{3D\_Cheetah++}, \texttt{3D\_Walker++}, \texttt{3D\_Humanoid++} and \texttt{3D\_Hopper++} [\texttt{3D\_CWHH++}].
We keep 20\% of the variants as the zero-shot testing set and use the rest for training.   
In particular, the standard half-cheetah~\cite{wawrzynski2007learning,wawrzynski2009cat} has been so far designed as a 2D-Planar model with the morphology of a walking animal. However, in 3D-SGRL, the half-cheetah is highly vulnerable to falling over in its locomotion, adding more difficulties to policy optimization.
On account of this limitation, we extend the model to a full-cheetah with one torso, four legs, and one tail made of 14 limbs, enabling it stronger locomotion ability to explore in our 3D-SGRL environments. 
More design details are shown in \Cref{sec:env-details}.

\paragraph{State Space} We take the initial position of the agent's torso as the center, and randomly select its initial orientation and the destination within a radius of $R$. When the agent reaches the assigned target position, we set another destination for it. To relieve the agent from falling down when turning at a high speed, we set the radius $R=10 km$ by default so that the agent will turn less frequently in an episode. We also set $R\in[10m,20m]$ as ``v2-variants'', which is more difficult since the agent will change the direction more frequently.

\paragraph{Action Space} The action space is enlarged by changing the type of the joint of torso from ``slide-slide-hinge'' to ``free'' and adding two more actuators that rotate around different axes of the joint. This allows the agent to have full DoFs to turn and move in arbitrary directions starting from arbitrary initial spatial configurations.

\begin{figure}[t!]
\centering
\includegraphics[width=\linewidth]{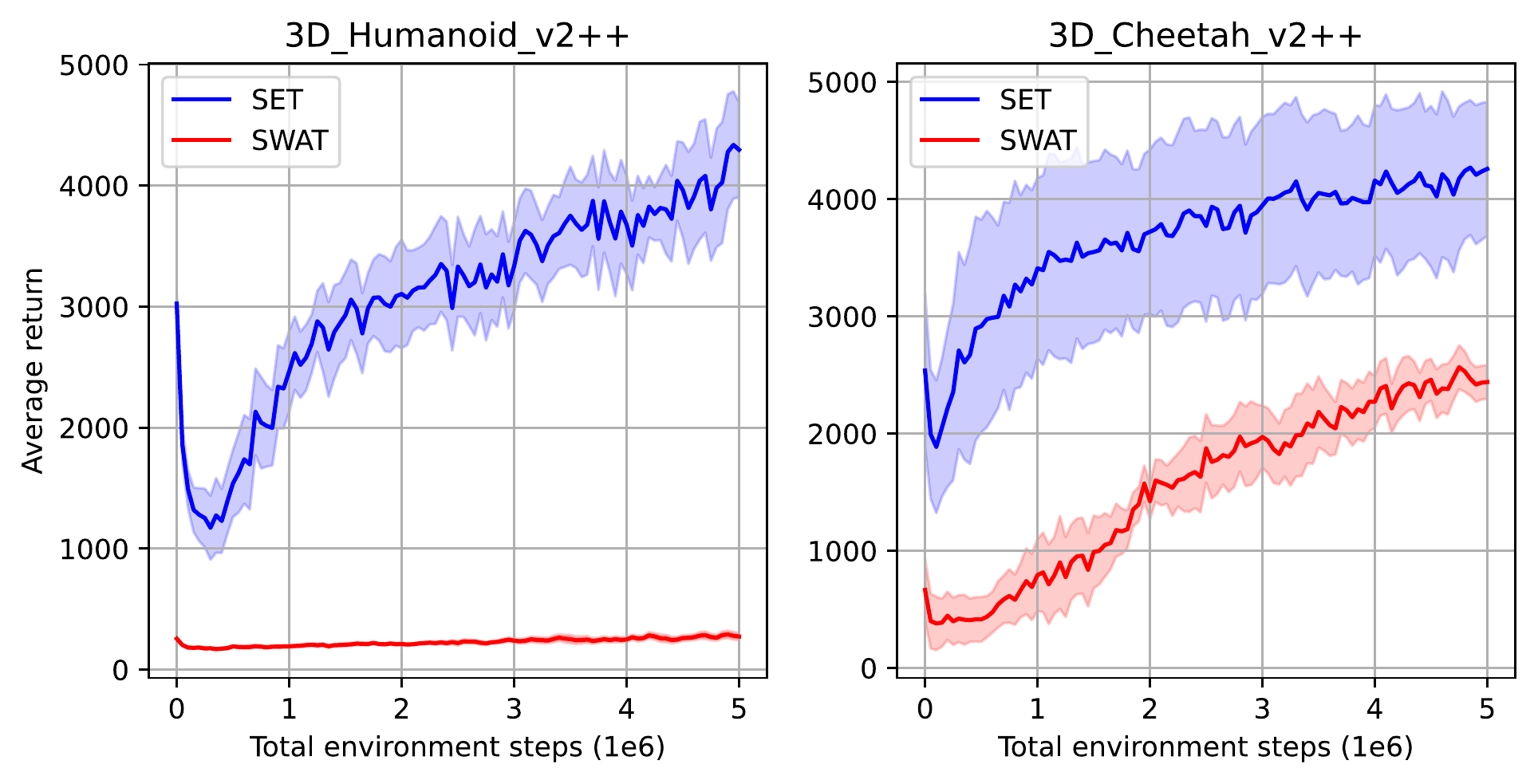}
\vspace{-.32in}
\caption{Training curves of v2-variants on \texttt{3D\_Humanoid++} and \texttt{3D\_Cheetah++}. 
}
\label{fig:v2-task}
\vspace{-.13in}
\end{figure}

\paragraph{Termination and Reward}
The goal in 3D-SGRL environments is learning to turn and move towards the assigned destination as fast as possible without falling over. 
Episode Termination follows that of the morphology-agnostic RL benchmark, but we modify the cheetah's termination to be the time it falls over or squats still.
The reward consists of four parts.
\textbf{1.} Alive bonus: Every timestep the agent is alive, it gets a reward of a fixed value 1 (\texttt{3D Cheetah}'s is 0 due to the stability of its morphological structure); 
\textbf{2.} Locomotion reward: It is a reward for moving towards the assigned target which is measured as  $\texttt{(distance\_before\_action}$  $\texttt{-distance\_after\_action)/dt}$, where $\texttt{dt}$ is the time between consecutive actions. This reward will be positive if the agent is close to the target position; 
\textbf{3.} Control cost: It is a cost for penalizing the agent if it takes actions that are too large. It is measured as $0.001 * \sum_{k=1}^{K}(\va_k)^2$; 
\textbf{4.} Forward reward (not available for \texttt{3D Hopper}): It is a reward of moving forward measured as  $\texttt{(coordinate\_after\_action -}$ $\texttt{ coordinate\_before\_action)}$$\cdot$$\texttt{forward\_direct}$ $\texttt{ion\_of\_torso/dt}$. This reward will be positive if the agent moves in the forward direction of torso.

\section{Evaluations and Ablations}

This section first introduces the baselines and implementations, then compares the performance of different methods on our 3D benchmarks and reports the ablation studies for the design of our method. 

\subsection{Baseline, Metric and Implementation}

\paragraph{Baselines} We compare our method \name{} against state-of-the-art methods \SMP{}~\citep{huang2020one} and \SWAT{}~\citep{hong2021structure}. 
We also compare \name{} with standard TD3-based non-morphology-agnostic RL: \Monolithic{} in single-tasks. 
Please refer to \Cref{sec:repro} for more details about baselines.

\paragraph{Metrics} \textbf{1.} Multi-task with different morphologies: For each multi-task environment discussed in \Cref{sec:env-agent}, a single policy is simultaneously trained on multiple variants. The policy in each plot is trained jointly on the training set (80\% of variants from that environment) and evaluated on these seen variants.
\textbf{2.} Zero-Shot Generalization:
We take the trained policies from multi-task and test on the unseen zero-shot testing variants.
\textbf{3.} Evaluation on v2-variants:
We evaluate \name{} in a transfer learning setting where the trained policies from multi-task are tested and fine-tuned on the v2-variants environments.
\textbf{4.} Single-task Learning: 
The policy in each plot is trained on one morphology variant and evaluated on this variant.

\paragraph{Implementations} We adopt the same input information and TD3~\citep{fujimoto2018addressing} as the underlying reinforcement learning algorithm for training the policy over all baselines, ablations, and \name{} for fairness. We implement \name{} in the \SWAT{} codebase.  There is no weight sharing between actor $\pi_\theta$ and critic $Q_{\pi_\theta}$. 
Each experiment is run with three seeds to report the mean and the standard error. The reward for each environment is calculated as the sum of instant rewards across an episode. The value of the maximum timesteps of an episode is $1,000$.

\begin{table}[t!]
    \centering
    \caption{Comparison in zero-shot evaluation on the test set. Note that we omit the lacking part in the name of morphologies.}
    \label{tab:zero-shot}
    \resizebox{\linewidth}{!}{
    \begin{tabular}{llll}
    \toprule
    \textbf{Environment}&\textbf{\name{}}&\textbf{\SWAT{}}&\textbf{\SMP{}}\\
     \midrule
    \multicolumn{4}{l}{in-domain (\texttt{3D\_Walker++}, \texttt{3D\_Humanoid++}, \texttt{3D\_Cheetah++})} \\
    \midrule
    \texttt{3d\_walker\_3}&$\textbf{276.2}\pm17.4$&$207.0\pm52.7 $&$56.8\pm15.1$\\
     \texttt{3d\_walker\_6}&$\textbf{431.3}\pm146.2$&$358.0\pm58.9$&$143.4\pm50.7$\\    
     \midrule
     \texttt{3d\_humanoid\_7}&$\textbf{244.8}\pm7.9$&$170.3\pm51.7$&$190.9\pm16.2$\\
     \texttt{3d\_humanoid\_8}&$\textbf{299.6}\pm23.7$&$141.4\pm22.1$&$185.4\pm9.2$\\
     \midrule     \texttt{3d\_cheetah\_11}&$\textbf{4643.9}\pm292.6$&$1785.3\pm999.3$&$2.0\pm2.9$\\
    \texttt{3d\_cheetah\_12}&$\textbf{916.0}\pm39.7$&$744.1\pm317.1$&$29.8\pm10.7$\\

         \midrule
         
         \multicolumn{4}{l}{cross-domain (\texttt{3D\_CWHH++})}\\
    \midrule
    \texttt{3d\_walker\_3}&$\textbf{206.8}\pm37.4$&$17.9\pm13.7$&$18.0\pm22.9$\\
     \texttt{3d\_walker\_6}&$\textbf{243.7}\pm32.3$&$114.9\pm40.3$&$103.9\pm1.8$\\    
     \midrule
     \texttt{3d\_humanoid\_7}&$\textbf{161.9}\pm3.4$&$152.0\pm6.8$&$124.2\pm15.7$\\
     \texttt{3d\_humanoid\_8}&$\textbf{180.0}\pm6.5$&$156.6\pm1.7$&$129.3\pm0.1$\\   
     \midrule     \texttt{3d\_cheetah\_11}&$\textbf{1078.1}\pm722.8$&$4.3\pm1.6$&$6.2\pm0.5$\\
    \texttt{3d\_cheetah\_12}&$\textbf{3038.3}\pm2803.3$&$349.7\pm304.3$&$6.6\pm1.2$\\
     
    \bottomrule
    
    \end{tabular}
    }
    \vspace{-.1in}
\end{table}

\begin{figure}[b!]
\centering
\includegraphics[width=\linewidth]{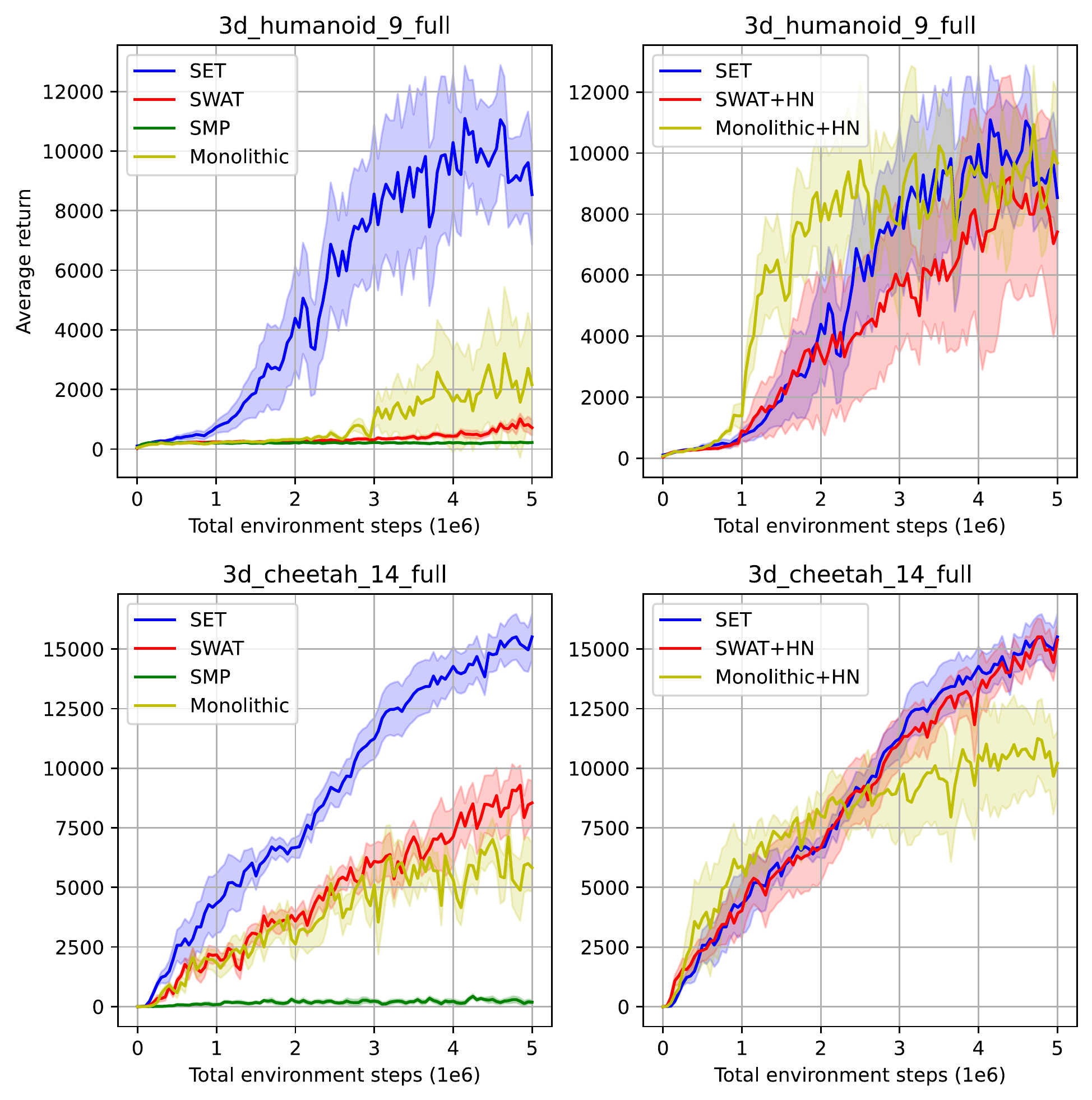}
\vspace{-.32in}
\caption{Training curves of single-task on
\texttt{3d\_humanoid} \texttt{\_9\_full} and \texttt{3d\_cheetah\_14\_full}.  On the left-hand side, we present the comparison with baselines, while on the right-hand side, we present the comparison with invariant methods.
}
\label{fig:single-task}
\vspace{-.1in}
\end{figure}

\subsection{Main Results}

\paragraph{Multi-task with different morphologies}
\label{sec:benchmark}
As shown in \Cref{fig:multi-task}, our \name{} outperforms all baselines by a large margin in all cases, indicating the remarkable superiority of taking into account the subequivariance upon Transformer. The baselines fail to achieve meaningful returns in most cases, which is possibly due to the large exploration space in our 3D-SGRL environments and they are prone to get trapped in local extreme points.

\paragraph{Zero-Shot Generalization}
\label{sec:zeroshot}
During test time, we assess the trained policy on a set of held-out agent morphologies. \Cref{tab:zero-shot} records the results of both in-domain and cross-domain settings. 
The training and zero-shot testing variants are listed on \Cref{tab:environments}.
For example, \name{} is trained on \texttt{3D\_Humanoid++} without  \texttt{3d\_humanoid\_7\_left\_leg} and \texttt{3d\_humanoid\_8\_right\_knee}, while these two excluded environments are used for testing. \Cref{tab:zero-shot} reports the average performance and the standard error over 3 seeds, where the return of each seed is calculated over 100 rollouts. Once again, we observe that \name{} yields better performance.

\paragraph{Evaluation on v2-variants}
The v2-variants ($R=10\sim20 m$) are more challenging. We conduct two-stage training in this scenario. In the first stage, we train the policy under the multi-task setting where $R=10km$. The results and related demos are in \Cref{sec:v2}.
In the second stage, we transfer the currently-trained policy to the $R=10\sim20 m$ setting on \texttt{3D\_Cheetah++} and \texttt{3D\_Humanoid++}. 
It is seen from \Cref{fig:v2-task} that \name{} is able to further improve the performance upon the first stage, while \SWAT{} hardly receives meaningful performance gain especially on \texttt{3D\_Humanoid++}.

\paragraph{Single-task Learning} 
Apart from \SMP{} and \SWAT{}, we implement another baseline \Monolithic{} for reference. \Cref{fig:single-task} displays the performance on \texttt{3d\_humanoid\_9\_full} and \texttt{3d\_cheetah\_14\_full}. In line with the observations in~\cite{dong2022low}, the GNN-based method \SMP{} is worse than the MLP-based model \Monolithic{}; but different from the results in~\cite{dong2022low}, \SWAT{} still surpasses \Monolithic{} on \texttt{3d\_cheetah\_14\_full}. We conjecture \SWAT{} benefits from the application of Transformer that is expressive enough to characterize the variation of our \texttt{3d\_cheetah\_14\_full} environments. Our model \name{} takes advantage of both the expressive power of the Transformer-akin model and the rational constraint by subequivariance, hence it delivers much better performance than all other methods.

\begin{figure}[t!]
\centering
\includegraphics[width=0.5\linewidth]{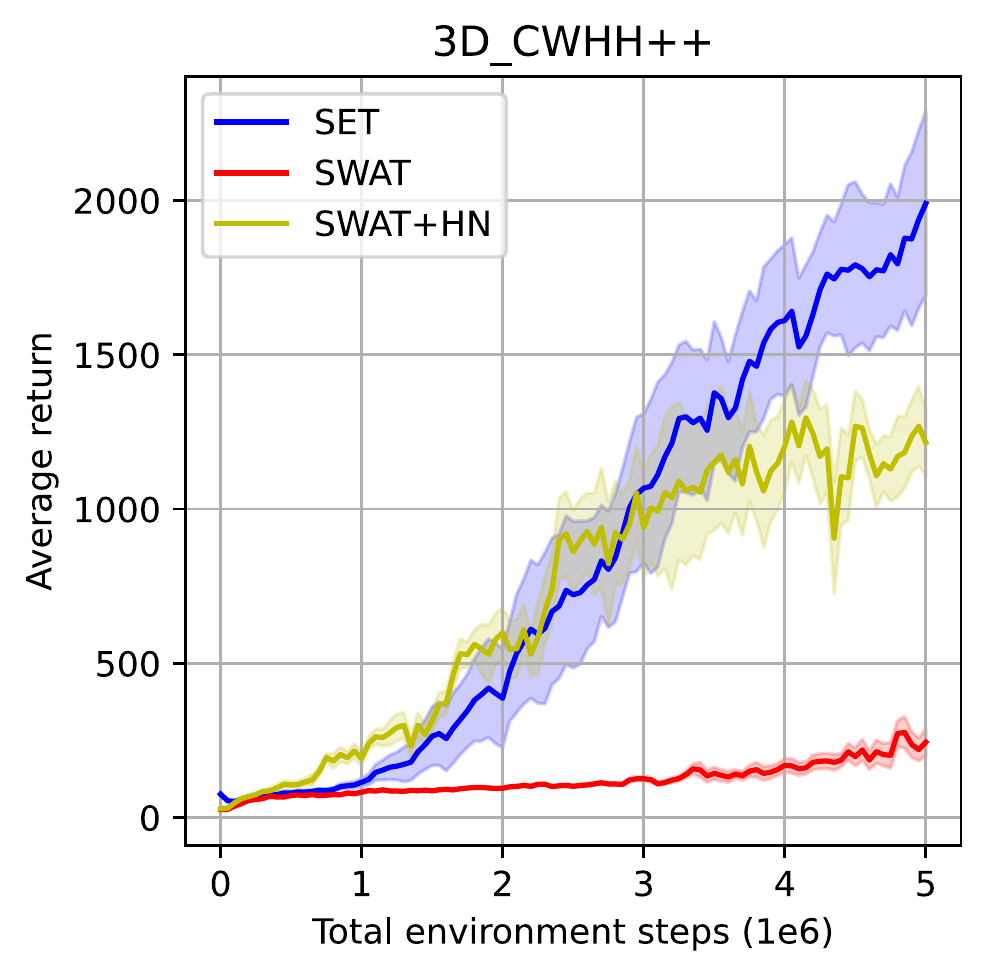}
\vspace{-.2in}
\caption{Training curves of multi-task on \texttt{3D\_CWHH++}. The comparison with invariant methods.
}
\label{fig:cwhh_hn}
\vspace{-.1in}
\end{figure}

\subsection{Comparison with Invariant Methods}
\label{sec:compare}
Invariant methods have been widely utilized in the 3D RL literature. For instance, in humanoid control, the presence of gravity allows for the normalization of state and action spaces in the heading (yaw) direction (\eg, a recent work~\cite{won2022physics}). This heading normalization (HN) technique transforms the global coordinate frame into a local coordinate frame, enabling the input geometric information to be mapped to a rotation- and translation-invariant representation. We compare \name{} with the following invariant variants: \textbf{1.} \SWAT{}+HN: a state-of-the-art morphology-agnostic baseline that uses the heading normalization, and \textbf{2.} Monolithic+HN: a standard TD3-based non-morphology-agnostic baseline that uses the heading normalization. As shown in \Cref{fig:single-task} and \Cref{fig:cwhh_hn}, \name{} can only be considered on par with \SWAT{}+HN, since heading normalization can achieve heading-equivariance by construction.

Indeed, there is a limitation of heading normalization in that it assumes a consistent definition of the ``forward" direction across all agents. Without a consistent ``forward" direction, the normalization scheme would need to be redefined for each individual agent, which could limit its transfer ability to different types of agents or environments. On the contrary, equivariant methods, such as the one proposed in our work, can be more generalizable as they do not rely on a specific normalization scheme and can adapt to different transformations in the environment.
We design a simple experiment to verify the above statement by translating the ``forward" direction of the agent via a certain bias angle during testing. \Cref{tab:bias} demonstrates the significant performance degradation caused by adding bias in the heading normalization.
Moreover, we can support this point through zero-shot generalization experiments, where we evaluate the trained policies from multi-task on unseen zero-shot testing variants. \Cref{tab:hn-zero-shot} demonstrates that \name{} has stronger generalization ability compared to \SWAT{}+HN.
For more detailed discussions, please refer to \Cref{sec:discuss}.

\begin{table}[t!]
    \centering
    \caption{Single-task performance with added bias in the heading normalization. The table header (the first row of the table) represents the environment and the bias. }
    \label{tab:bias}
    \resizebox{\linewidth}{!}{
    \begin{tabular}{ccccc}
    \toprule
     \multirow{2}{*}{Methods}&\multicolumn{2}{c}{\texttt{3d\_humanoid\_9\_full}}&\multicolumn{2}{c}{\texttt{3d\_cheetah\_14\_full}}\\
     & $0^\circ$ & $180^\circ$  & $0^\circ$ & $180^\circ$ \\
     \midrule
     \Monolithic{}+HN & $13142.2\pm2840.2$	&$57.8\pm12.0$	&$11357.4\pm1933.0$	&$-3.2\pm0.7$\\
     \SWAT{}+HN & $8517.7\pm1796.4$	&$92.3\pm17.8$	&$15924.9\pm543.1$	&$-1.2\pm0.4$\\
     \name{} & $9931.9\pm632.0$	&$10106.4\pm2023.4$	&$14987.9\pm710.7$	&$14957.9\pm758.0$\\
    \bottomrule
    \end{tabular}}
\vspace{-.1in}
\end{table}

\begin{table}[t!]
    \centering
    \caption{Compared with Heading Normalization in zero-shot evaluation on the test set. Note that we omit the lacking part in the name of morphologies.}
    \label{tab:hn-zero-shot}
    \resizebox{0.8\linewidth}{!}{
    \begin{tabular}{lll}
    \toprule
    \textbf{Environment}&\textbf{\name{}}&\textbf{\SWAT{}+HN}\\
         \midrule
         \multicolumn{3}{l}{cross-domain (\texttt{3D\_CWHH++})}\\
    \midrule
    \texttt{3d\_walker\_3}&$\textbf{206.8}\pm37.4$&$26.3\pm72.4$\\
     \texttt{3d\_walker\_6}&$\textbf{243.7}\pm32.3$&$156.8\pm11.1$\\    
     \midrule
     \texttt{3d\_humanoid\_7}&$\textbf{161.9}\pm3.4$&$130.2\pm2.1$\\
     \texttt{3d\_humanoid\_8}&$\textbf{180.0}\pm6.5$&$152.9\pm36.8$\\   
     \midrule     \texttt{3d\_cheetah\_11}&$\textbf{1078.1}\pm722.8$&$786.5\pm779.3$\\
    \texttt{3d\_cheetah\_12}&$\textbf{3038.3}\pm2803.3$&$2517.3\pm2113.9$\\
    \bottomrule
    \end{tabular}}
\vspace{-.1in}
\end{table}

\subsection{Ablation}
\label{sec:ablation}

\begin{figure}[t!]
\centering
\includegraphics[width=\linewidth]{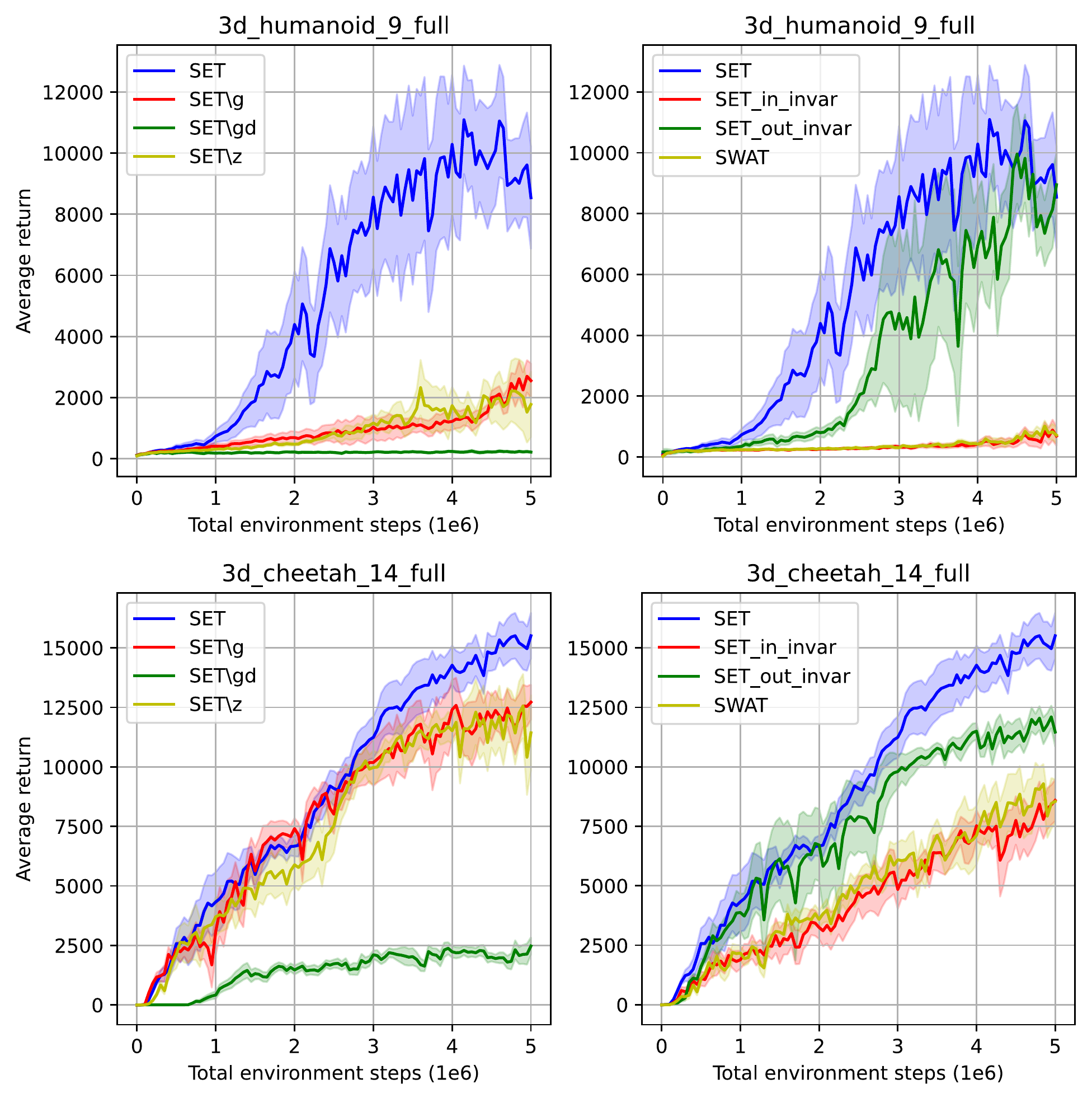}
\vspace{-.33in}
\caption{Training curves of ablations of \name{} on
\texttt{3d\_humanoid} \texttt{\_9\_full} and \texttt{3d\_cheetah\_14\_full}. 
}
\label{fig:ablation}
\vspace{-.1in}
\end{figure}

\begin{figure}[t!]
\centering
\includegraphics[width=0.9\linewidth]{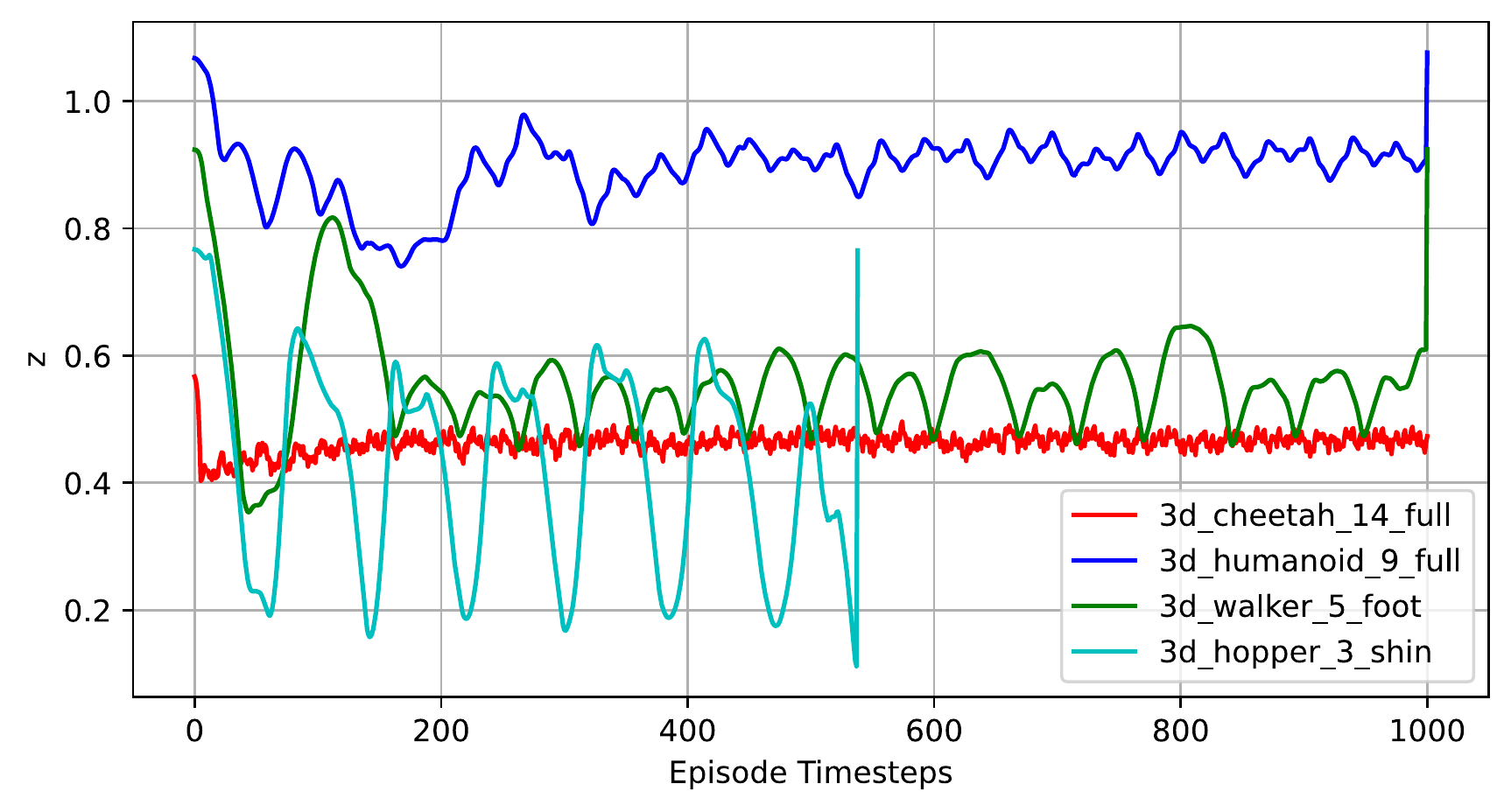}
\vspace{-.23in}
\caption{Average height of all limbs.
}
\label{fig:z}
\vskip -0.1in
\end{figure}

We ablate the following variants in \Cref{fig:ablation}: 
\textbf{1.} \name{}$\backslash$g: an $\text{O}(3)$-equivariant model, where gravity $\vec{\vg}$ is removed from the external force and concatenated into the scalar input, $\vh^{(0)}_i=[\vh^{(0)}_i, \vec{\vg}]$;
\textbf{2.} \name{}$\backslash$gd: an $\text{O}(3)$-equivariant model, where both $\vec{\vg}$ and $\vec{\vd}$ are considered as scalars: $\vh^{(0)}_i=[\vh^{(0)}_i, \vec{\vg}, \vec{\vd}]$;
\textbf{3.} \name{}$\backslash$z: an $\text{O}_{\vec\vg}(3)$-equivariant model without \Cref{eq:z}, by omitting the height $\Vec{\vp}_i^{z}$;
\textbf{4.} \name{}\_in\_invar: a non-equivariant model without all geometric vectors, instead taking them as the scalar input, $\vh^{(0)}_i=[\vh^{(0)}_i, \vec{\mZ}_{i}, \vec{\vg}, \vec{\vd}]$; 
\textbf{5.} \name{}\_out\_invar: an $\text{O}_{\vec\vg}(3)$-equivariant model by replacing the action output by the projection strategy in \Cref{eq:action} with an $\text{O}_{\vec\vg}(3)$-invariant mapping $\va_i = \mW_{\pi_\theta}\text{vec}(\mM^{(L)}_i) + b_{\pi_\theta}$.

\textbf{1.} \name{}$\backslash$g and \name{}$\backslash$z, compared with \name{}, gain close performance on \texttt{3d\_cheetah\_14\_full}, but are much worse on \texttt{3d\_humanoid\_9\_full}. This is reasonable, as the agent \texttt{3d\_cheetah\_14\_full} has four legs and can locomote stably (see \Cref{fig:z}). It is thus NOT so essential to consider the effect of gravity and the height to the ground on \texttt{3d\_cheetah\_14\_full}. As for \texttt{3d\_humanoid\_9\_full} with 2 legs, however, it is important to sense the direction of gravity and detect the height to avoid potential falling down, hence the correct modeling of gravity and the height are necessary for locomotion policy learning.      
\textbf{2.} The performance of \name{}$\backslash$gd  is poor in both cases, indicating that maintaining the direction information of the task guidance is indispensable.
\textbf{3.} \name{}\_in\_invar behaves much worse than \name{}, which verifies the importance to incorporate subequivariance into our model design. 
\textbf{4.} \name{}\_out\_invar is worse than \name{} but already exceeds other variants. The equivariant output $\vec{\mT}_i$ in \name{} contains rich orientation information, and it is more direct to obtain the output torque by projecting  $\vec{\mT}_i$, than \name{}\_out\_invar which uses the invariant matrix $\mM^{(L)}_i$ to predict the action.

\section{Discussion}

In current machine learning research, equivariance and attention are both powerful ideas. To learn a shared graph-based policy in 3D-SGRL, we design \name{}, a novel transformer model that preserves geometric symmetry by construction. Experimental results strongly support the necessity of encoding symmetry into the policy network, which demonstrates its wide applicability in various 3D environments.
We also compare the Monolithic MLP-based model using heading normalization for single-task training in \Cref{fig:single-task}. It can be found that a simple MLP with heading normalization may outperform the benefits brought by equivariance and attention. Therefore, in comparison to traditional methods in single-task settings, we cannot guarantee that all humanoids and legged robots will experience considerable enhancement when using our equivariant methods. 
In this work, our main contribution is extending the 2D benchmark to 3D for morphology-agnostic RL, which mainly addresses challenges in multi-task learning with agents of inhomogeneous morphology where MLP may not be applicable.
Although these are just initial steps, we believe that further exploration of this research direction will lead to valuable contributions to the research community.

\section*{Acknowledgements}
This work is jointly funded by 
``New Generation Artificial Intelligence" Key Field Research and Development Plan of Guangdong Province (2021B0101410002),
the National Science and Technology Major Project of the Ministry of Science and Technology of China (No.2018AAA0102900),
the Sino-German Collaborative Research Project Crossmodal Learning (NSFC 62061136001/DFG TRR169),
THU-Bosch JCML Center,
the National Natural Science Foundation of China under Grant U22A2057, 
the National Natural Science Foundation of China (No.62006137), 
Beijing Outstanding Young Scientist Program (No.BJJWZYJH012019100020098), 
and 
Scientific Research Fund Project of Renmin University of China (Start-up Fund Project for New Teachers).
We sincerely thank the reviewers for their comments that significantly improved our paper's quality. 
Our heartfelt thanks go to Yu Luo, Tianying Ji, Chengliang Zhong, and Chao Yang for fruitful discussions. 
Finally, Runfa Chen expresses gratitude to his fiancée, Xia Zhong, for her unwavering love and support.

\bibliography{SGRL}

\begin{thebibliography}{41}
\providecommand{\natexlab}[1]{#1}
\providecommand{\url}[1]{\texttt{#1}}
\expandafter\ifx\csname urlstyle\endcsname\relax
  \providecommand{\doi}[1]{doi: #1}\else
  \providecommand{\doi}{doi: \begingroup \urlstyle{rm}\Url}\fi

\bibitem[Ba et~al.(2016)Ba, Kiros, and Hinton]{ba2016layer}
Ba, J.~L., Kiros, J.~R., and Hinton, G.~E.
\newblock Layer normalization.
\newblock \emph{arXiv preprint arXiv:1607.06450}, 2016.

\bibitem[Battaglia et~al.(2018)Battaglia, Hamrick, Bapst, Sanchez-Gonzalez,
  Zambaldi, Malinowski, Tacchetti, Raposo, Santoro, Faulkner,
  et~al.]{battaglia2018relational}
Battaglia, P.~W., Hamrick, J.~B., Bapst, V., Sanchez-Gonzalez, A., Zambaldi,
  V., Malinowski, M., Tacchetti, A., Raposo, D., Santoro, A., Faulkner, R.,
  et~al.
\newblock Relational inductive biases, deep learning, and graph networks.
\newblock \emph{arXiv preprint arXiv:1806.01261}, 2018.

\bibitem[Chen et~al.(2018)Chen, Murali, and Gupta]{chen2018hardware}
Chen, T., Murali, A., and Gupta, A.
\newblock Hardware conditioned policies for multi-robot transfer learning.
\newblock In \emph{Advances in Neural Information Processing Systems},
  volume~31, 2018.

\bibitem[{Cohen} \& {Welling}(2016){Cohen} and {Welling}]{cohen2016group}
{Cohen}, T.~S. and {Welling}, M.
\newblock Group equivariant convolutional networks.
\newblock In \emph{International Conference on Machine Learning}, 2016.

\bibitem[Cohen \& Welling(2017)Cohen and Welling]{cohen2016steerable}
Cohen, T.~S. and Welling, M.
\newblock Steerable {CNNs}.
\newblock In \emph{International Conference on Learning Representations}, 2017.

\bibitem[D'Eramo et~al.(2020)D'Eramo, Tateo, Bonarini, Restelli, Peters,
  et~al.]{d2020sharing}
D'Eramo, C., Tateo, D., Bonarini, A., Restelli, M., Peters, J., et~al.
\newblock Sharing knowledge in multi-task deep reinforcement learning.
\newblock In \emph{International Conference on Learning Representations}, 2020.

\bibitem[Devin et~al.(2017)Devin, Gupta, Darrell, Abbeel, and
  Levine]{devin2017learning}
Devin, C., Gupta, A., Darrell, T., Abbeel, P., and Levine, S.
\newblock Learning modular neural network policies for multi-task and
  multi-robot transfer.
\newblock In \emph{2017 IEEE International Conference on Robotics and
  Automation}, pp.\  2169--2176. IEEE, 2017.

\bibitem[Dong et~al.(2022)Dong, Wang, Liu, and Zhang]{dong2022low}
Dong, H., Wang, T., Liu, J., and Zhang, C.
\newblock Low-rank modular reinforcement learning via muscle synergy.
\newblock In \emph{Advances in Neural Information Processing Systems}, 2022.

\bibitem[Fuchs et~al.(2020)Fuchs, Worrall, Fischer, and Welling]{fuchs2020se3}
Fuchs, F., Worrall, D., Fischer, V., and Welling, M.
\newblock Se(3)-transformers: 3d roto-translation equivariant attention
  networks.
\newblock In \emph{Advances in Neural Information Processing Systems},
  volume~33, pp.\  1970--1981. Curran Associates, Inc., 2020.

\bibitem[Fujimoto et~al.(2018)Fujimoto, Hoof, and
  Meger]{fujimoto2018addressing}
Fujimoto, S., Hoof, H., and Meger, D.
\newblock Addressing function approximation error in actor-critic methods.
\newblock In \emph{International Conference on Machine Learning}, pp.\
  1587--1596. PMLR, 2018.

\bibitem[Furuta et~al.(2023)Furuta, Iwasawa, Matsuo, and Gu]{furuta2023asystem}
Furuta, H., Iwasawa, Y., Matsuo, Y., and Gu, S.~S.
\newblock A system for morphology-task generalization via unified
  representation and behavior distillation.
\newblock In \emph{International Conference on Learning Representations}, 2023.

\bibitem[Gupta et~al.(2022)Gupta, Fan, Ganguli, and
  Fei-Fei]{gupta2022metamorph}
Gupta, A., Fan, L., Ganguli, S., and Fei-Fei, L.
\newblock Metamorph: Learning universal controllers with transformers.
\newblock In \emph{International Conference on Learning Representations}, 2022.

\bibitem[Han et~al.(2022{\natexlab{a}})Han, Huang, Ma, Li, Tenenbaum, and
  Gan]{han2022learning}
Han, J., Huang, W., Ma, H., Li, J., Tenenbaum, J.~B., and Gan, C.
\newblock Learning physical dynamics with subequivariant graph neural networks.
\newblock In \emph{Advances in Neural Information Processing Systems},
  volume~35, pp.\  26256--26268, 2022{\natexlab{a}}.

\bibitem[Han et~al.(2022{\natexlab{b}})Han, Rong, Xu, and
  Huang]{han2022geometrically}
Han, J., Rong, Y., Xu, T., and Huang, W.
\newblock Geometrically equivariant graph neural networks: A survey.
\newblock \emph{arXiv preprint arXiv:2202.07230}, 2022{\natexlab{b}}.

\bibitem[Hong et~al.(2021)Hong, Yoon, and Kim]{hong2021structure}
Hong, S., Yoon, D., and Kim, K.-E.
\newblock Structure-aware transformer policy for inhomogeneous multi-task
  reinforcement learning.
\newblock In \emph{International Conference on Learning Representations}, 2021.

\bibitem[Hsu et~al.(2022)Hsu, Verkuil, Liu, Lin, Hie, Sercu, Lerer, and
  Rives]{hsu2022learning}
Hsu, C., Verkuil, R., Liu, J., Lin, Z., Hie, B., Sercu, T., Lerer, A., and
  Rives, A.
\newblock Learning inverse folding from millions of predicted structures.
\newblock In \emph{International Conference on Machine Learning}, pp.\
  8946--8970. PMLR, 2022.

\bibitem[Huang et~al.(2020)Huang, Mordatch, and Pathak]{huang2020one}
Huang, W., Mordatch, I., and Pathak, D.
\newblock One policy to control them all: Shared modular policies for
  agent-agnostic control.
\newblock In \emph{International Conference on Machine Learning}, pp.\
  4455--4464. PMLR, 2020.

\bibitem[Huang et~al.(2022)Huang, Han, Rong, Xu, Sun, and
  Huang]{huang2022equivariant}
Huang, W., Han, J., Rong, Y., Xu, T., Sun, F., and Huang, J.
\newblock Equivariant graph mechanics networks with constraints.
\newblock In \emph{International Conference on Learning Representations}, 2022.

\bibitem[J{\o}rgensen \& Bhowmik(2022)J{\o}rgensen and
  Bhowmik]{jorgensen2022equivariant}
J{\o}rgensen, P.~B. and Bhowmik, A.
\newblock Equivariant graph neural networks for fast electron density
  estimation of molecules, liquids, and solids.
\newblock \emph{npj Computational Materials}, 8\penalty0 (1):\penalty0 183,
  2022.

\bibitem[Joshi et~al.(2022)Joshi, Bodnar, Mathis, Cohen, and
  Li{\`o}]{joshi2022expressive}
Joshi, C.~K., Bodnar, C., Mathis, S.~V., Cohen, T., and Li{\`o}, P.
\newblock On the expressive power of geometric graph neural networks.
\newblock In \emph{NeurIPS 2022 Workshop on Symmetry and Geometry in Neural
  Representations}, 2022.

\bibitem[Kurin et~al.(2020)Kurin, Igl, Rockt{\"a}schel, Boehmer, and
  Whiteson]{kurin2020cage}
Kurin, V., Igl, M., Rockt{\"a}schel, T., Boehmer, W., and Whiteson, S.
\newblock My body is a cage: the role of morphology in graph-based incompatible
  control.
\newblock In \emph{International Conference on Learning Representations}, 2020.

\bibitem[Lillicrap et~al.(2016)Lillicrap, Hunt, Pritzel, Heess, Erez, Tassa,
  Silver, and Wierstra]{lillicrap2015continuous}
Lillicrap, T.~P., Hunt, J.~J., Pritzel, A., Heess, N., Erez, T., Tassa, Y.,
  Silver, D., and Wierstra, D.
\newblock Continuous control with deep reinforcement learning.
\newblock In \emph{International Conference on Learning Representations}, 2016.

\bibitem[Mnih et~al.(2015)Mnih, Kavukcuoglu, Silver, Rusu, Veness, Bellemare,
  Graves, Riedmiller, Fidjeland, Ostrovski, et~al.]{mnih2015human}
Mnih, V., Kavukcuoglu, K., Silver, D., Rusu, A.~A., Veness, J., Bellemare,
  M.~G., Graves, A., Riedmiller, M., Fidjeland, A.~K., Ostrovski, G., et~al.
\newblock Human-level control through deep reinforcement learning.
\newblock \emph{Nature}, 518\penalty0 (7540):\penalty0 529--533, 2015.

\bibitem[Mnih et~al.(2016)Mnih, Badia, Mirza, Graves, Lillicrap, Harley,
  Silver, and Kavukcuoglu]{mnih2016asynchronous}
Mnih, V., Badia, A.~P., Mirza, M., Graves, A., Lillicrap, T., Harley, T.,
  Silver, D., and Kavukcuoglu, K.
\newblock Asynchronous methods for deep reinforcement learning.
\newblock In \emph{International Conference on Machine Learning}, pp.\
  1928--1937. PMLR, 2016.

\bibitem[Pathak et~al.(2019)Pathak, Lu, Darrell, Isola, and
  Efros]{pathak2019learning}
Pathak, D., Lu, C., Darrell, T., Isola, P., and Efros, A.~A.
\newblock Learning to control self-assembling morphologies: a study of
  generalization via modularity.
\newblock In \emph{Advances in Neural Information Processing Systems},
  volume~32, 2019.

\bibitem[Satorras et~al.(2021)Satorras, Hoogeboom, and Welling]{satorras2021en}
Satorras, V.~G., Hoogeboom, E., and Welling, M.
\newblock E (n) equivariant graph neural networks.
\newblock In \emph{International Conference on Machine Learning}, pp.\
  9323--9332. PMLR, 2021.

\bibitem[Schulman et~al.(2017)Schulman, Wolski, Dhariwal, Radford, and
  Klimov]{schulman2017proximal}
Schulman, J., Wolski, F., Dhariwal, P., Radford, A., and Klimov, O.
\newblock Proximal policy optimization algorithms.
\newblock \emph{arXiv preprint arXiv:1707.06347}, 2017.

\bibitem[Sch{\"u}tt et~al.(2021)Sch{\"u}tt, Unke, and
  Gastegger]{schutt2021equivariant}
Sch{\"u}tt, K., Unke, O., and Gastegger, M.
\newblock Equivariant message passing for the prediction of tensorial
  properties and molecular spectra.
\newblock In \emph{International Conference on Machine Learning}, pp.\
  9377--9388. PMLR, 2021.

\bibitem[Silver et~al.(2016)Silver, Huang, Maddison, Guez, Sifre, Van
  Den~Driessche, Schrittwieser, Antonoglou, Panneershelvam, Lanctot,
  et~al.]{silver2016mastering}
Silver, D., Huang, A., Maddison, C.~J., Guez, A., Sifre, L., Van Den~Driessche,
  G., Schrittwieser, J., Antonoglou, I., Panneershelvam, V., Lanctot, M.,
  et~al.
\newblock Mastering the game of go with deep neural networks and tree search.
\newblock \emph{Nature}, 529\penalty0 (7587):\penalty0 484--489, 2016.

\bibitem[Tassa et~al.(2012)Tassa, Erez, and Todorov]{tassa2012synthesis}
Tassa, Y., Erez, T., and Todorov, E.
\newblock Synthesis and stabilization of complex behaviors through online
  trajectory optimization.
\newblock In \emph{2012 IEEE/RSJ International Conference on Intelligent Robots
  and Systems}, pp.\  4906--4913. IEEE, 2012.

\bibitem[Thomas et~al.(2018)Thomas, Smidt, Kearnes, Yang, Li, Kohlhoff, and
  Riley]{thomas2018tensor}
Thomas, N., Smidt, T., Kearnes, S., Yang, L., Li, L., Kohlhoff, K., and Riley,
  P.
\newblock Tensor field networks: Rotation-and translation-equivariant neural
  networks for 3d point clouds.
\newblock \emph{arXiv preprint arXiv:1802.08219}, 2018.

\bibitem[Todorov et~al.(2012)Todorov, Erez, and Tassa]{todorov2012mujoco}
Todorov, E., Erez, T., and Tassa, Y.
\newblock Mujoco: A physics engine for model-based control.
\newblock In \emph{2012 IEEE/RSJ International Conference on Intelligent Robots
  and Systems}, pp.\  5026--5033. IEEE, 2012.

\bibitem[Trabucco et~al.(2022)Trabucco, Phielipp, and
  Berseth]{trabucco2022anymorph}
Trabucco, B., Phielipp, M., and Berseth, G.
\newblock Anymorph: Learning transferable polices by inferring agent
  morphology.
\newblock In \emph{International Conference on Machine Learning}, pp.\
  21677--21691. PMLR, 2022.

\bibitem[van~der Pol et~al.(2020)van~der Pol, Worrall, van Hoof, Oliehoek, and
  Welling]{van2020mdp}
van~der Pol, E., Worrall, D., van Hoof, H., Oliehoek, F., and Welling, M.
\newblock Mdp homomorphic networks: Group symmetries in reinforcement learning.
\newblock In \emph{Advances in Neural Information Processing Systems},
  volume~33, pp.\  4199--4210, 2020.

\bibitem[Vaswani et~al.(2017)Vaswani, Shazeer, Parmar, Uszkoreit, Jones, Gomez,
  Kaiser, and Polosukhin]{vaswani2017attention}
Vaswani, A., Shazeer, N., Parmar, N., Uszkoreit, J., Jones, L., Gomez, A.~N.,
  Kaiser, {\L}., and Polosukhin, I.
\newblock Attention is all you need.
\newblock In \emph{Advances in Neural Information Processing Systems},
  volume~30, 2017.

\bibitem[Villar et~al.(2021)Villar, Hogg, Storey-Fisher, Yao, and
  Blum-Smith]{villar2021scalars}
Villar, S., Hogg, D.~W., Storey-Fisher, K., Yao, W., and Blum-Smith, B.
\newblock Scalars are universal: Equivariant machine learning, structured like
  classical physics.
\newblock In \emph{Advances in Neural Information Processing Systems},
  volume~34, pp.\  28848--28863, 2021.

\bibitem[Wang et~al.(2018)Wang, Liao, Ba, and Fidler]{wang2018nervenet}
Wang, T., Liao, R., Ba, J., and Fidler, S.
\newblock Nervenet: Learning structured policy with graph neural networks.
\newblock In \emph{International conference on learning representations}, 2018.

\bibitem[Wawrzynski(2007)]{wawrzynski2007learning}
Wawrzynski, P.
\newblock Learning to control a 6-degree-of-freedom walking robot.
\newblock In \emph{EUROCON 2007-The International Conference on" Computer as a
  Tool"}, pp.\  698--705. IEEE, 2007.

\bibitem[Wawrzy{\'n}ski(2009)]{wawrzynski2009cat}
Wawrzy{\'n}ski, P.
\newblock A cat-like robot real-time learning to run.
\newblock In \emph{International Conference on Adaptive and Natural Computing
  Algorithms}, pp.\  380--390. Springer, 2009.

\bibitem[Won et~al.(2022)Won, Gopinath, and Hodgins]{won2022physics}
Won, J., Gopinath, D., and Hodgins, J.
\newblock Physics-based character controllers using conditional vaes.
\newblock \emph{ACM Transactions on Graphics (TOG)}, 41\penalty0 (4):\penalty0
  1--12, 2022.

\bibitem[Worrall et~al.(2017)Worrall, Garbin, Turmukhambetov, and
  Brostow]{worrall2017harmonic}
Worrall, D.~E., Garbin, S.~J., Turmukhambetov, D., and Brostow, G.~J.
\newblock Harmonic networks: Deep translation and rotation equivariance.
\newblock In \emph{Proceedings of the IEEE Conference on Computer Vision and
  Pattern Recognition}, pp.\  5028--5037, 2017.

\end{thebibliography}
\bibliographystyle{icml2023}

\newpage
\appendix
\onecolumn


\section{Proofs}
\label{sec:proof}
In this section, we theoretically prove that our proposed SubEquivariant Transformer (\name{}), and the final output action and critic Q-function value preserve the symmetry as desired. We start by verifying our design in~\name{}.

\begin{theorem}
\label{prop:set-equ}
Let $(\vec\mZ', \vh')=\varphi(\vec\mZ, \vec\vg, \vec\vd, \vh)$, where $\varphi$ is one layer of our~\name{} specified from \Cref{eq:M} to \Cref{eq:h}. Let $(\vec\mZ'^\ast, \vh'^\ast)=\varphi(\mO\vec\mZ, \vec\vg, \mO\vec\vd, \vh), \forall \mO\in\text{O}_{\vec\vg}(3)$. Then, we have $(\vec\mZ'^\ast, \vh'^\ast)=(\mO\vec\mZ', \vh')$, indicating $\varphi$ is $\text{O}_{\vec\vg}(3)$-equivariant.
\end{theorem}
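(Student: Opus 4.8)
The plan is to show that every intermediate quantity produced inside the layer is either $\text{O}_{\vec{\vg}}(3)$-invariant (the scalar channels) or $\text{O}_{\vec{\vg}}(3)$-equivariant (the geometric channels), and that the invariance of the attention weights is exactly what lets equivariance pass cleanly through the aggregation in \Cref{eq:mz}. The whole argument funnels through one key fact: the value matrix $\mM_i^{(l)}$ of \Cref{eq:M} is \emph{invariant}. Once that is in hand, the queries, keys, scalar messages, and softmax coefficients are all invariant, being deterministic functions of $\mM$, whereas the geometric update inherits equivariance from the stacked geometric tensors.

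First I would record how channel mixing interacts with a group element. Because the learnable matrices $\mW_{\vec{\vm}}^{(l)},\mW_{\vec{\vu}}^{(l)},\mW_{\vec{\mZ}}^{(l)}$ multiply on the channel (right) dimension while $\mO\in\text{O}_{\vec{\vg}}(3)$ acts on the spatial (left) dimension, the two commute; hence replacing $\vec{\mZ}_i^{(l)}$ by $\mO\vec{\mZ}_i^{(l)}$ sends $\vec{\vm}_i^{(l)}\mapsto\mO\vec{\vm}_i^{(l)}$ and likewise $\vec{\vu}_j^{(l)}\mapsto\mO\vec{\vu}_j^{(l)}$.

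The crux, and the step I expect to demand the most care, is the invariance of $\mM_i^{(l)}$. The subtlety is that gravity $\vec{\vg}$ is \emph{not} transformed in the input (it is a fixed external field), whereas $\vec{\vm}_i$ and $\vec{\vd}$ are. The resolution is precisely the defining property $\mO\vec{\vg}=\vec{\vg}$ of the subgroup $\text{O}_{\vec{\vg}}(3)$: this lets me rewrite the untouched $\vec{\vg}$ as $\mO\vec{\vg}$, so the stack factors as $[\mO\vec{\vm}_i,\vec{\vg},\mO\vec{\vd}]=\mO[\vec{\vm}_i,\vec{\vg},\vec{\vd}]$. The Gram matrix then satisfies $(\mO[\vec{\vm}_i,\vec{\vg},\vec{\vd}])^\top(\mO[\vec{\vm}_i,\vec{\vg},\vec{\vd}])=[\vec{\vm}_i,\vec{\vg},\vec{\vd}]^\top\mO^\top\mO[\vec{\vm}_i,\vec{\vg},\vec{\vd}]=[\vec{\vm}_i,\vec{\vg},\vec{\vd}]^\top[\vec{\vm}_i,\vec{\vg},\vec{\vd}]$ since $\mO^\top\mO=\mI$. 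Feeding this invariant Gram matrix, together with the untouched scalars $\vh_i^{(l)}$, through the MLPs $\sigma_{\vec{\vm}}$ and $\sigma_{\mM}$ leaves $\mM_i^{(l)}$ unchanged; this is just the invariant construction of \Cref{eq:subGMN} instantiated inside the attention block.

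Finally I would propagate the two conclusions. Invariance of $\mM_i^{(l)}$ makes $\text{vec}(\mM_i^{(l)})$ invariant, so the affine maps defining $\vq_i^{(l)},\vk_i^{(l)},\vv_j^{(l)}$ produce invariant outputs and the softmax coefficients $\alpha_{ij}^{(l)}$ are invariant. For the geometric update \Cref{eq:mz}, the aggregated tensor transforms as $[\mO\vec{\vu}_j^{(l)},\vec{\vg},\mO\vec{\vd}]=\mO[\vec{\vu}_j^{(l)},\vec{\vg},\vec{\vd}]$ (again by $\mO\vec{\vg}=\vec{\vg}$); since the weights $\alpha_{ij}^{(l)}$ are invariant scalars and $\mO$ commutes with right-multiplication by $\mW_{\vec{\mZ}}^{(l)}$, I can pull $\mO$ outside the entire sum, yielding $\vec{\mZ}_i^{(l+1)}\mapsto\mO\vec{\mZ}_i^{(l+1)}$. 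For the scalar update \Cref{eq:h}, every ingredient $\vh_i^{(l)}$, $\alpha_{ij}^{(l)}$, $\vv_j^{(l)}$ is invariant and $\text{LN}$ acts deterministically on its argument, so $\vh_i^{(l+1)}$ is invariant. Combining the two gives $(\vec{\mZ}'^\ast,\vh'^\ast)=(\mO\vec{\mZ}',\vh')$, which is the desired $\text{O}_{\vec{\vg}}(3)$-equivariance.
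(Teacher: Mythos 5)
Your proof is correct and follows essentially the same route as the paper's: establish that channel mixing commutes with the group action, show the Gram matrix $[\vec{\vm}_i,\vec{\vg},\vec{\vd}]^\top[\vec{\vm}_i,\vec{\vg},\vec{\vd}]$ — and hence $\mM_i^{(l)}$, the queries, keys, values, and attention coefficients — is invariant via $\mO\vec{\vg}=\vec{\vg}$ and $\mO^\top\mO=\mI$, then pull $\mO$ out of the aggregation in the geometric update. The only cosmetic difference is that you factor the stack as $[\mO\vec{\vm}_i,\vec{\vg},\mO\vec{\vd}]=\mO[\vec{\vm}_i,\vec{\vg},\vec{\vd}]$ before forming the Gram matrix, whereas the paper expands the Gram matrix entry-by-entry; both rest on the identical algebraic facts.
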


\begin{proof}
In the first place, we have $\vec\vm_i^\ast = \mO\vec\mZ_i\mW_{\vec\vm}=\mO\vec{\vm}_i$. For the message $\mM_i$, we have,
\begin{align}
\mM^\ast_i&=\sigma_\mM\left(\sigma_{\vec\vm}\left([\vec{\vm}^{\ast}_i,\vec\vg,\mO\vec{\vd}]^\top[\vec{\vm}^{\ast}_i,\vec\vg,\mO\vec{\vd}]\right),\vh_i\right),\\
&=\sigma_\mM\left(\sigma_{\vec\vm}\left([\mO\vec{\vm}_i,\vec\vg,\mO\vec{\vd}]^\top[\mO\vec{\vm}_i,\vec\vg,\mO\vec{\vd}]\right),\vh_i\right),\\
\label{eq:111}
&=\sigma_\mM\left(\sigma_{\vec\vm}\left(\begin{bmatrix} \vec\vm_i^\top\mO^\top\mO\vec\vm_i& \vec\vm_i^\top\mO^\top\vec\vg & \vec\vm_i^\top\mO^\top\mO\vec\vd\\
\vec\vg^\top\mO\vec\vm_i & \vec\vg^\top\vec\vg & \vec\vg^\top\mO\vec\vd \\
 \vec\vd^\top\mO^\top\mO\vec\vm_i & \vec\vd^\top\mO^\top\vec\vg & \vec\vd^\top\mO^\top\mO\vec\vd\end{bmatrix}\right),\vh_i\right), \\
 \label{eq:222}
 &=\sigma_\mM\left(\sigma_{\vec\vm}\left(\begin{bmatrix} \vec\vm_i^\top\vec\vm_i& \vec\vm_i^\top\vec\vg & \vec\vm_i^\top\vec\vd\\
\vec\vg^\top\vec\vm_i & \vec\vg^\top\vec\vg & \vec\vg^\top\vec\vd \\
 \vec\vd^\top\vec\vm_i & \vec\vd^\top\vec\vg & \vec\vd^\top\vec\vd\end{bmatrix}\right),\vh_i\right), \\
 \label{eq:333}
&=\sigma_\mM\left(\sigma_{\vec\vm}\left([\vec{\vm}_i,\vec\vg,\vec{\vd}]^\top[\vec{\vm}_i,\vec\vg,\vec{\vd}]\right),\vh_i\right) = \mM_i.
\end{align} 
From \Cref{eq:111} to \Cref{eq:222} we use the fact $\mO^\top\mO=\mI$ and $\mO^\top\vec\vg=\vec\vg$, by the definition of the group $\text{O}_{\vec\vg}(3)$. With the $\text{O}_{\vec\vg}(3)$-invariant message $\mM_i$, it is then immediately illustrated that the query $\vq_i$, key $\vk_i$, value message $\vv_j$, and the attention coefficient $\alpha_{ij}$ are all $\text{O}_{\vec\vg}(3)$-invariant, and value message $\vec{\vu}_j^\ast=\vec{\mZ}_j^\ast\mW_{\vec\vu}=\mO\vec{\mZ}_j\mW_{\vec\vu}=\mO\vec{\vu}_j$ is $\text{O}_{\vec\vg}(3)$-equivariant. Finally, we have,
\begin{align}
\vec{\mZ}'^{\ast}_i&=\mO\vec{\mZ}_i+\sum_j\left(\alpha_{ij}[\mO\vec{\vu}_j,\vec{\vg},\mO\vec{\vd}]\right)\mW_{\vec{\mZ}},\\
&=\mO\vec{\mZ}_i+\sum_j\left(\alpha_{ij}\mO[\vec{\vu}_j,\vec{\vg},\vec{\vd}]\right)\mW_{\vec{\mZ}},\\
&=\mO\left(\vec{\mZ}_i+\sum_j\left(\alpha_{ij}[\vec{\vu}_j,\vec{\vg},\vec{\vd}]\right)\mW_{\vec{\mZ}}\right),\\
&=\mO\vec{\mZ}',
\end{align}
and similarly,
\begin{align}
\vh'^\ast_i&=\text{LN}\left(\vh_i+\mW_\vh\sum_j\left(\alpha_{ij}\vv_j\right)  + \vb_{\vh}\right)=\vh'_i.
\end{align}
By going through all nodes $i\in\{1,\cdots,|\gV|\}$ the proof is completed.
\end{proof}
By iteratively applying \Cref{prop:set-equ} for $l\in\{1,\cdots,L\}$ layers, we readily obtain the $\text{O}_{\vec\vg}(3)$-equivariance of the entire~\name{}.

As for the actor and critic, we additionally have the following corollary.
\begin{corollary}
\label{prop:act-equ}
Let $\va, Q_{\pi_\theta}$ be the output action and the critic of 3D-SGRL with $\vec\mZ, \vec\vg, \vec\vd, \vh$ as input. Let $\va^\ast,  Q_{\pi_\theta}^\ast$ be the action and critic with $\mO\vec\mZ, \vec\vg, \mO\vec\vd, \vh$ as input, $\mO\in\text{O}_{\vec\vg}(3)$. Then, $(\va^\ast, Q^\ast)=(\va, Q)$, indicating the output action and critic preserve $\text{O}_{\vec\vg}(3)$-invariance.
\end{corollary}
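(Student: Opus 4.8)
The plan is to bootstrap everything from the layerwise equivariance already established in \Cref{prop:set-equ}. By applying that result inductively across all $L$ layers, feeding the transformed input $(\mO\vec\mZ, \vec\vg, \mO\vec\vd, \vh)$ to the stacked~\name{} produces $\vec\mZ^{(L)}\mapsto\mO\vec\mZ^{(L)}$ (equivariant), while every scalar stream $\vh^{(l)}_i$ and, crucially, every invariant message matrix $\mM^{(l)}_i$ is left unchanged. These two facts --- equivariance of the geometric stream and invariance of $\mM^{(L)}_i$ --- are the only ingredients needed for both the actor and the critic, so the bulk of the work is already done.

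For the actor, I would first track the readout torque of \Cref{eq:readout}. Since $\vec{\vu}^{(L)}_i=\vec\mZ^{(L)}_i\mW^{(L)}_{\vec{\vu}}$ inherits equivariance, it transforms to $\mO\vec{\vu}^{(L)}_i$; the matrix $\mM^{(L)}_i$ is unchanged; and the stacked vector obeys $[\mO\vec{\vu}^{(L)}_i,\vec\vg,\mO\vec\vd]=\mO[\vec{\vu}^{(L)}_i,\vec\vg,\vec\vd]$, where I absorb $\mO$ onto the gravity column using the subgroup property $\mO\vec\vg=\vec\vg$. Hence $\vec\mT_i\mapsto\mO\vec\mT_i$, i.e. the torque is $\text{O}_{\vec\vg}(3)$-equivariant. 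The action of \Cref{eq:action} is then the projection of $\vec\mT_i$ onto the joint axes $\vec\vx_i,\vec\vy_i,\vec\vz_i$, which are themselves components of the geometric input $\vec\mZ_i$ and therefore transform equivariantly as $\mO\vec\vx_i,\mO\vec\vy_i,\mO\vec\vz_i$. Each coordinate of $\va_i$ is thus an inner product of two equivariant vectors: $(\mO\vec\mT_i)^\top(\mO\vec\vx_i)=\vec\mT_i^\top\mO^\top\mO\vec\vx_i=\vec\mT_i^\top\vec\vx_i$ by $\mO^\top\mO=\mI$, and similarly for the $\vec\vy_i,\vec\vz_i$ components, giving $\va^\ast_i=\va_i$, i.e. invariance of the action.

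For the critic, I would exploit the invariance of the action just proved. The critic is computed by a \emph{separate}~\name{} whose scalar input is augmented to $\vh^{(0)}_i=[\vh_i,\va_i]$; because $\va_i$ is $\text{O}_{\vec\vg}(3)$-invariant, this augmented scalar input is unchanged under the transformation, so \Cref{prop:set-equ} again yields an unchanged $\mM^{(L)}_i$ for the critic network. Since $Q_{\pi_\theta}$ is merely an affine readout of $\text{vec}(\mM^{(L)}_i)$, it follows at once that $Q^\ast_{\pi_\theta}=Q_{\pi_\theta}$. Assembling the per-limb actions and the critic readout yields $(\va^\ast,Q^\ast)=(\va,Q)$.

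The genuinely load-bearing step is the equivariance of the readout $\vec\mT_i$, which hinges on commuting $\mO$ through the stack $[\vec{\vu}^{(L)}_i,\vec\vg,\vec\vd]$ --- precisely where the subgroup constraint $\mO\vec\vg=\vec\vg$ is indispensable. Everything else is routine: the action invariance is immediate from orthogonality of $\mO$, and the critic invariance is a one-line consequence once the action is known to be invariant. The only subtlety worth flagging in the write-up is the \emph{ordering}: action invariance must be proved before critic invariance, since the critic ingests $\va_i$ as part of its input.
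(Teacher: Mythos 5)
Your proposal is correct and follows essentially the same route as the paper's own proof: apply \Cref{prop:set-equ} across the $L$ layers to get $\vec{\vu}^{(L)}_i\mapsto\mO\vec{\vu}^{(L)}_i$ and invariance of $\mM^{(L)}_i$, commute $\mO$ through $[\vec{\vu}^{(L)}_i,\vec\vg,\vec\vd]$ via $\mO\vec\vg=\vec\vg$ to get $\vec\mT_i\mapsto\mO\vec\mT_i$, cancel $\mO^\top\mO=\mI$ in the inner products with the equivariant joint axes to get $\va^\ast_i=\va_i$, and then read off critic invariance from the invariant $\mM^{(L)}_i$, noting that the invariant $\va_i$ concatenated into the critic's scalar input does not disturb this. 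Your explicit remark on the ordering (action invariance before critic invariance) is the same observation the paper makes in its closing sentence, just stated more prominently.
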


\begin{proof}
    By \Cref{prop:set-equ}, we have $\vec{\mZ}^{(L)\ast}_i=\mO\vec{\mZ}^{(L)}_i$, and $\mM_i^{(L)\ast}=\mM_i^{(L)}$. Therefore, $\vec{\vu}^{(L)\ast}_i=\vec{\mZ}^{(L)\ast}_i\mW_{\vec\vu}^{(L)}=\mO\vec{\mZ}^{(L)}_i\mW_{\vec\vu}^{(L)}=\mO\vec{\vu}^{(L)\ast}_i$. Hence,
    \begin{align}
    \label{eq:555}
         \vec{\mT}^\ast_i&=[\mO\vec{\vu}^{(L)}_i, \vec{\vg},\mO\vec\vd]\sigma_{\mM}\left(\mM^{(L)}_i\right)\mW_{\vec{\mT}},\\
          \label{eq:666}
         &=\mO\left([\vec{\vu}^{(L)}_i, \vec{\vg},\vec\vd]\sigma_{\mM}\left(\mM^{(L)}_i\right)\mW_{\vec{\mT}}\right),\\
         &=\mO\vec{\mT}_i,
    \end{align}
    where \Cref{eq:555} to \Cref{eq:666}, again, leverages the fact that $\vec\vg=\mO\vec\vg$, given the definition of $\text{O}_{\vec\vg}$.
    Finally,
    \begin{align}
            \va^\ast_i&=[\vec{\mT}_i\mO^\top\mO\vec{\vx}_i,\vec{\mT}_i\mO^\top\mO\vec{\vy}_i,\vec{\mT}_i\mO^\top\mO\vec{\vz}_i],\\
            &=[\vec{\mT}_i\cdot\vec{\vx}_i,\vec{\mT}_i\cdot\vec{\vy}_i,\vec{\mT}_i\cdot\vec{\vz}_i]=\va_i,
    \end{align}
    and meanwhile,
    \begin{align}
        Q_{\pi_\theta}^\ast = \mW_{Q_{\pi_\theta}}\text{vec}(\mM^{(L)}_i) + b_{Q_{\pi_\theta}}=Q_{\pi_\theta},
    \end{align}
    since concatenating the $\text{O}_{\vec\vg}(3)$-invariant $\va$ into the input $\vh$ does not affect the $\text{O}_{\vec\vg}(3)$-invariance of the message $\mM_i^{(L)}$.
    
\end{proof}

\section{Related Works}\label{sec:related_work}
\paragraph{Morphology-Agnostic RL}
In recent years, we have seen the emergence and development of multi-task RL with the inhomogeneous morphology setting, where the state and action spaces are different across tasks~\cite{devin2017learning,chen2018hardware,d2020sharing}.
The morphology-agnostic approach, which learns policies for each joint using multiple message passing schemes, decentralizes the control of multi-joint robots.
In order to deal with the inhomogeneous setting, NerveNet~\cite{wang2018nervenet}, DGN~\cite{pathak2019learning} and \SMP{}~\cite{huang2020one} represent the morphology of the agent as a graph and deploy GNNs as the policy network.
\Amorpheus{}~\cite{kurin2020cage}, \SWAT{}~\cite{hong2021structure} and \Solar{}~\cite{dong2022low}  utilize the self-attention mechanism instead of GNNs for direct communication. 
In morphology-agnostic RL, both of their investigations demonstrate that the graph-based policy has significant advantages over a monolithic policy.
Our work is based on \SWAT{} and introduces a set of new benchmarks that relax the over-simplified state and action space of existing works to a much more challenging scenario with immersive search space.

\paragraph{Geometrically Equivariant Models}
Prominently, there are certain symmetries in the physical world and there have been a number of studies about group equivariant models~\cite{cohen2016group, cohen2016steerable, worrall2017harmonic}.
In recent years, a field of research known as geometrically equivariant graph neural networks~\cite{han2022geometrically}, leverages symmetry as an inductive bias in learning. 
These models are designed such that their outputs will rotate/translate/reflect in the same way as the inputs, hence retaining the symmetry. 
Several methods are used to achieve this goal, such as using irreducible representation to solve group convolution~\cite{thomas2018tensor,fuchs2020se3} or utilizing invariant scalarization~\cite{villar2021scalars} like taking the inner product~\cite{satorras2021en,huang2022equivariant,han2022learning}. 
Along with GMN's~\cite{huang2022equivariant} and SGNN's~\cite{han2022learning} approaches to scalarization, our method is a member of this family. 
In a Markov decision process (MDP) with symmetries~\cite{van2020mdp}, there are symmetries in the state-action space where policies can thus be optimized in
the simpler abstract MDP. \citet{van2020mdp} attempts to learn equivariant policy and invariant value networks in 2D toy environments. Our work focuses on the realization of this motivation in more complex 3D physics simulation environments.

\section{More Experimental Details}
\subsection{Environments and Agents}
\label{sec:env-details}

We choose the following environments from morphology-agnostic RL benchmark~\cite{huang2020one} to evaluate our methods: \texttt{Hopper++}, \texttt{Walker++}, \texttt{Humanoid++}, \texttt{Cheetah++}. To facilitate the study of subequivariant graph reinforcement learning across these agents, we modify the 2D-Planar agents and extend them into 3D agents. Specifically, we modify the joint of torso from the combination of ``slide-slide-hinge'' type to ``free'' type. Normally, each joint of the agent in the 2D-Planar environment has only one hinge-type actuator to make it rotate around $y$-axis. In order to make the agent more flexible to explore and optimize the learning process, we expand its action space including increasing the number of hinge-type actuators from 1 to 3,  thus the DoF of each joint is also enlarged to 3. The two newly-added actuators enable the joint to basically rotate around $x$-axis and $z$-axis, respectively.

\texttt{3D Hopper}: The rotation range of the joint's two newly-added actuators is limited to $[-\frac{10}{180} \pi, \frac{10}{180} \pi]$.

\texttt{3D Walker}: The legs of \texttt{3D Walker} is designed with reference to the legs of standard \texttt{3D Humanoid}~\cite{tassa2012synthesis}.
The rotation range of each joint is limited to new intervals. The rotation range of the joints in left and right leg are the same, we only show the intervals of a joint of the left leg:
\begin{equation*}
\aligned
\text{the joint of thigh:}&[-\frac{25}{180} \pi,\frac{5}{180}],[-\frac{20}{180} \pi,\frac{110}{180} \pi],[-\frac{60}{180} \pi,\frac{35}{180} \pi],\\
\text{the joint of shin:}&[-\frac{1}{180} \pi,\frac{1}{180}],[-\frac{160}{180} \pi,-\frac{2}{180} \pi],[-\frac{1}{180} \pi,\frac{1}{180} \pi],\\
\text{the joint of foot:}&[-\frac{1}{180} \pi,\frac{1}{180}],[-\frac{45}{180} \pi,\frac{45}{180} \pi],[-\frac{30}{180} \pi,\frac{5}{180} \pi].
\endaligned
\end{equation*}

\texttt{3D Humanoid}: We refer to the standard \texttt{3D Humanoid}~\cite{tassa2012synthesis} and expand the number of actuators. 
The rotation range of newly-added joint actuators are limited to $[-\frac{1}{180} \pi, \frac{1}{180} \pi]$.

\texttt{3D Cheetah}: 
The standard half-cheetah~\cite{wawrzynski2007learning,wawrzynski2009cat}  is specially designed as a planar model of a walking animal, which would not fall over in 2D-Planar environments, so there is no interruption in each episode. But in 3D-SGRL environments, the half-cheetah very easy to falls over and this will interrupt its learning process, making it more difficult for effective locomotion.
So we modify the model of a half-cheetah into a full-cheetah, and its torso, four legs and tail are made of 14 limbs. 
\texttt{3D Cheetah} is about 1.1 meters long, 0.6 meters high and weighs 55kg. We limit the ``strengths'' of its joints within the range from 30 to 120Nm. So it is designed as a 3D model of a large and agile cat with many joints yet smaller strength, making it more stable and less easy to fall over in 3D-SGRL environments while retaining a strong locomotion ability.
As a result, the full-cheetah is more adaptable to 3D-SGRL environments.
The rotation range of joints is limited to new intervals. 
The rotation range of the tail is $[-\frac{20}{180} \pi, \frac{20}{180} \pi], [-\frac{80}{180} \pi, \frac{80}{180} \pi], [-\frac{1}{180} \pi, \frac{1}{180} \pi]$.
The rotation range of the left limb and the right limb are the same, we only show the intervals of those left:
\begin{equation*}
\aligned
\text{the joint of back thigh:}&[-\frac{10}{180} \pi,\frac{0}{180}],[-\frac{60}{180} \pi,\frac{30}{180} \pi],[-\frac{15}{180} \pi,\frac{5}{180} \pi],\\
\text{the joint of back shin:}&[-\frac{1}{180} \pi,\frac{1}{180}],[-\frac{45}{180} \pi,\frac{45}{180} \pi],[-\frac{1}{180} \pi,\frac{1}{180} \pi],\\
\text{the joint of back foot:}&[-\frac{1}{180} \pi,\frac{1}{180}],[-\frac{45}{180} \pi,\frac{25}{180} \pi],[-\frac{15}{180} \pi,\frac{5}{180} \pi],\\
\text{the joint of front thigh:}&[-\frac{15}{180} \pi,\frac{5}{180}],[-\frac{40}{180} \pi,\frac{60}{180} \pi],[-\frac{20}{180} \pi,\frac{10}{180} \pi],\\
\text{the joint of front shin:}&[-\frac{1}{180} \pi,\frac{1}{180}],[-\frac{50}{180} \pi,\frac{70}{180} \pi],[-\frac{1}{180} \pi,\frac{1}{180} \pi],\\
\text{the joint of front foot:}&[-\frac{1}{180} \pi,\frac{1}{180}],[-\frac{30}{180} \pi,\frac{30}{180} \pi],[-\frac{20}{180} \pi,\frac{5}{180} \pi].
\endaligned
\end{equation*}

To systematically investigate the proposed method applied to multi-task training, we construct several variants from the agents we mentioned above, as shown in \Cref{tab:environments}. 
The morphologies of ten variants of \texttt{3D Cheetah} are different from that of the 2D-Planar, as is shown in \Cref{fig:demo}.

\begin{figure*}[t!]
\centering
\includegraphics[width=\linewidth]{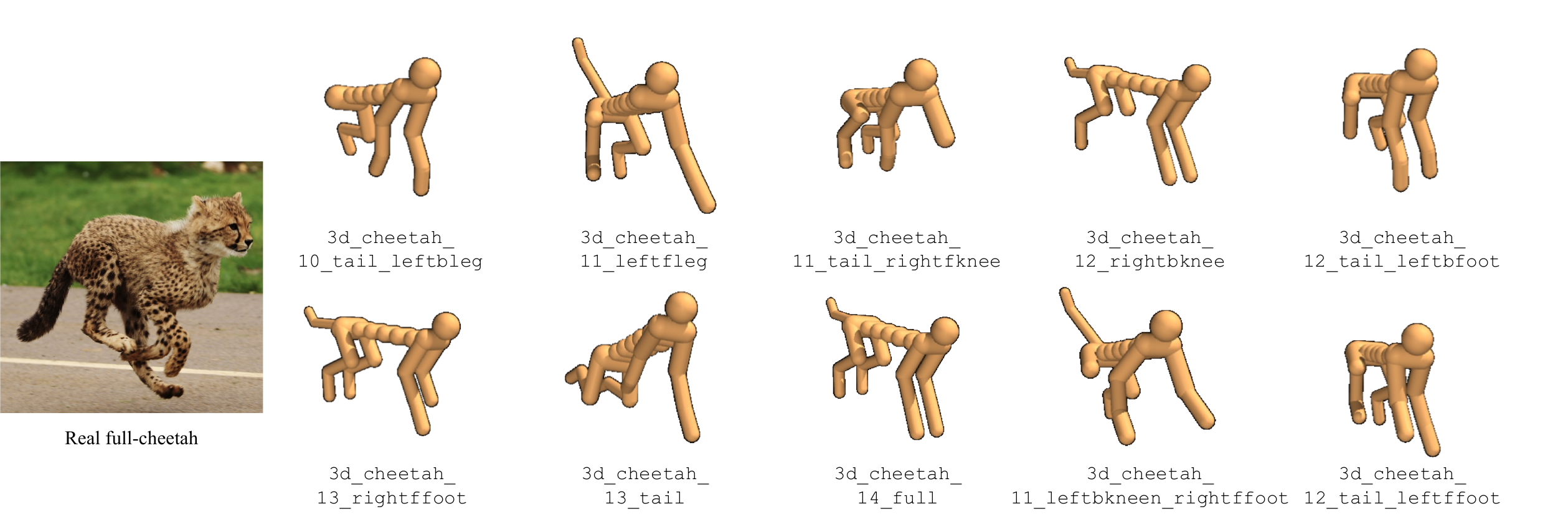}
\vspace{-.2in}
\caption{The morphologies of 10 variants of cheetah. }
\label{fig:demo}
\vspace{-.1in}
\end{figure*}

\begin{table*}
    \centering
    \caption{Full list of environments used in this work.}
    \label{tab:environments}
    \begin{tabular}{lll}
    \toprule
    \textbf{Environment}&\textbf{Training}&\textbf{Zero-shot testing}\\
     \midrule
     \texttt{3D\_Hopper++}&&\\
     \midrule
     &\texttt{3d\_hopper\_3\_shin}&\\
     &\texttt{3d\_hopper\_4\_lower\_shin}&\\
     &\texttt{3d\_hopper\_5\_full}&\\
    \midrule
    \texttt{3D\_Walker++}&&\\
    \midrule
    & \texttt{3d\_walker\_2\_right\_leg\_left\_knee} &\texttt{3d\_walker\_3\_left\_knee\_right\_knee}\\
     & \texttt{3d\_walker\_3\_left\_leg\_right\_foot} &\texttt{3d\_walker\_6\_right\_foot}\\
     & \texttt{3d\_walker\_4\_right\_knee\_left\_foot} \\
     & \texttt{3d\_walker\_5\_foot} \\
     & \texttt{3d\_walker\_5\_left\_knee} &\\
     & \texttt{3d\_walker\_7\_full} &\\
     \midrule
     \texttt{3D\_Humanoid++}&&\\
     \midrule
     & \texttt{3d\_humanoid\_7\_left\_arm} &\texttt{3d\_humanoid\_7\_left\_leg}\\
     & \texttt{3d\_humanoid\_7\_lower\_arms} & \texttt{3d\_humanoid\_8\_right\_knee}\\
     & \texttt{3d\_humanoid\_7\_right\_arm} &\\
     & \texttt{3d\_humanoid\_7\_right\_leg} &\\
     & \texttt{3d\_humanoid\_8\_left\_knee} &\\
     & \texttt{3d\_humanoid\_9\_full} &\\
     \midrule
     \texttt{3D\_Cheetah++}&&\\
     \midrule
     & \texttt{3d\_cheetah\_10\_tail\_leftbleg} & \texttt{3d\_cheetah\_11\_leftbkneen\_rightffoot}\\
     & \texttt{3d\_cheetah\_11\_leftfleg} & \texttt{3d\_cheetah\_12\_tail\_leftffoot}\\
     & \texttt{3d\_cheetah\_11\_tail\_rightfknee} &\\
     & \texttt{3d\_cheetah\_12\_rightbknee} &\\
     & \texttt{3d\_cheetah\_12\_tail\_leftbfoot} &\\
     & \texttt{3d\_cheetah\_13\_rightffoot} &\\
     & \texttt{3d\_cheetah\_13\_tail} &\\
     & \texttt{3d\_cheetah\_14\_full} &\\
     \midrule
     \multicolumn{3}{l}{\texttt{3D\_Walker-3D\_Humanoid-3D\_Hopper++(3D\_WHH++)}}\\
     \midrule
     &\multicolumn{2}{l}{Union of \texttt{3D\_Walker++}, \texttt{3D\_Humanoid++} and \texttt{3D\_Hopper++}}\\
     \midrule
     \multicolumn{3}{l}{\texttt{3D\_Cheetah-3D\_Walker-3D\_Humanoid-3D\_Hopper++(3D\_CWHH++)}}\\
     \midrule
     &\multicolumn{2}{l}{Union of \texttt{3D\_Cheetah++}, \texttt{3D\_Walker++}, \texttt{3D\_Humanoid++} and \texttt{3D\_Hopper++}}\\
    \bottomrule
    
    \end{tabular}
\end{table*}

\subsection{Baselines}
\label{sec:repro} 
This part illustrates the implementations of these baselines.

\paragraph{\SMP{}} \citet{huang2020one} employs GNNs as policy networks and uses both bottom-up and top-down message passing schemes through the links between joints for coordinating. We use the implementation of \SMP{} in the \SWAT{} codebase, which is the same as the original implementation of \SMP{} provided by \citet{huang2020one}.

\paragraph{\SWAT{}} All of the GNN-like works show that morphology-agnostic policies are more advantageous than the monolithic policy in tasks aiming at tackling different morphologies. However, \citet{kurin2020cage} validate a hypothesis that the benefit extracted from morphological structures by GNNs can be offset by their negative effect on message passing.
They further propose a transformer-based method, \Amorpheus{}, which relies on mechanisms for self-attention as a way of message transmission.
\citet{hong2021structure} make use of morphological traits via structural embeddings, enabling direct communication and capitalizing on the structural bias.
We use the original implementation of \SWAT{} released by \citet{hong2021structure}. 
For a fair comparison, \name{} uses the same hyperparameters as \SWAT{} (\Cref{tab:hypers}).

\paragraph{\Monolithic{}} We choose TD3 as the standard monolithic RL baseline.  The actor and critic of TD3 are implemented by fully-connected neural networks.

\subsection{Implementation details}\label{sec:imple}

For the scalar features $\vh_i \in \sR^{13}$,  in addition to retaining the original rotation angle of joint, we also undergo the following processing:
the rotation angle and range of joint are represented as three scalar numbers $(angle_t, low, high)$ normalized to $[0, 1]$, where $angle_t$ is the joint position at time $t$, and $[low, high]$ is the allowed joint range.
The type of limb is a 4-dimensional one-hot vector representing ``torso'', ``thigh'', ``shin'', ``foot'' and ``other'' respectively.
Besides, note that the torso limb has no joint actuator in any of these environments, so we ignore its predicted torque values. 
We implement \name{} based on \SWAT{} codebase~\citep{hong2021structure}, which is built on Official PyTorch Tutorial. \SWAT{} also shares the codebase with \SMP{}~\citep{huang2020one} and \Amorpheus{}~\cite{kurin2020cage}. \Cref{tab:hypers} provides the hyperparameters needed to replicate our experiments. Our codes are available on \href{https://github.com/alpc91/SGRL}{https://github.com/alpc91/SGRL}.

\begin{table}[h]
\centering
\caption{Hyperparameters of our \name{}.}\label{tab:hypers}
\begin{tabular}{@{}cc@{}}
\toprule
\textbf{Hyperparameter}        & \textbf{Value} \\ \midrule
Learning rate                  & 0.0001         \\
Gradient clipping              & 0.1            \\
Normalization                  & LayerNorm      \\
Total attention layers         & 3              \\
Attention heads                & 2              \\
Attention embedding size        & 128            \\
Attention hidden size          & 256            \\
Matrix embedding size          & 32$\times$32           \\
Matrix hidden size          &    512          \\
Encoder output size            & 128            \\
Mini-batch size                & 100            \\
Maximum Replay buffer size             & 10M            \\
 \bottomrule
\end{tabular}
\end{table}

\section{More Discussion about Invariant Methods}
\label{sec:discuss}
Specifically, by choosing the ``forward" direction, we can achieve heading-equivariance with heading normalization. 
In essence, the lack of a predetermined ``forward" direction that is consistent across all agents prevents us from transferring experiences between different agents. For example, if we create a duplicate of one agent and redefine the ``forward" direction, heading normalization will no longer be applicable.
In particular, let's consider two agents that have very similar morphology, with the only difference being that their torso orientations are opposite and both encourage movement along the torso orientation. If the torso orientation is selected as the ``forward" direction, the normalization applied to these two agents will vary significantly. As a result, the policy learned by one agent will not generalize to the other agent, unless the other agent's movement mode is to move in the opposite orientation of the torso. Therefore, generalization performance is affected by the choice of the ``forward" direction and the agent's movement mode. 

Besides, there is extensive experimental evidence~\cite{hsu2022learning, jorgensen2022equivariant, schutt2021equivariant, joshi2022expressive} indicating that equivariant methods that preserve equivariance at each layer outperform those invariant methods that solely apply transformations at the input layer to obtain invariant features and then use an invariant network. Our framework, falling into the equivariant family, enables the propagation of directional information through message passing steps, allowing the extraction of rich geometric information such as angular messages. In contrast, the invariant methods may result in the loss of higher-order correlations between nodes, which are crucial for modeling the geometric relationships between them.

\section{More Ablation on Equivariance}
\label{sec:equivariance}
In addition, we conduct another experiment by fixing the initial orientation as 0° when training, but allowing arbitrary angles when testing. As shown in \Cref{tab:fixinitd}, \name{} generalizes well to all cases. On the contrary, \SWAT{} only obtains desirable performance when the testing angle is fixed to 0° which is the same as that during the training process, and its performance drops rapidly in other cases, especially at 180°. The experiments here justify the efficacy of involving orthogonality equivariance.

\begin{table*}
    \centering
    \caption{Fixed initial orientation (about 0°) training, arbitrary initial orientation (any given angle) test on \texttt{3d\_cheetah\_14\_full}. The table header (the first row of the table) represents the progress of training and the initial orientation. }
    \label{tab:fixinitd}
    \resizebox{\textwidth}{!}{
    \begin{tabular}{ccccccccccc}
    \toprule
     \multirow{2}{*}{Methods}&\multicolumn{5}{c}{500k training steps}&\multicolumn{5}{c}{1M training steps}\\
     & $0^\circ$ & $90^\circ$ & $180^\circ$ & $270^\circ$ & random & $0^\circ$ & $90^\circ$ & $180^\circ$ & $270^\circ$ & random\\
     \midrule
     \SWAT{} & $1886.1\pm148.9$ & $1005.5\pm615.3$ & $\textbf{120.5}\pm178.5$ & $791.0\pm493.4$ & $1232.3\pm72.9$ & $2592.6\pm155.6$ & $1340.2\pm668.0$ & $\textbf{-5.6}\pm8.5$ & $1193.5\pm345.2$ & $1178.6\pm674.9$\\
     \name{} & $1587.4\pm411.3$ & $1695.6\pm278.4$ & $\textbf{1659.9}\pm110.2$ & $1388.3\pm173.8$ & $1465.2\pm161.0$ & $4622.0\pm292.8$ & $4799.5\pm172.9$ & $\textbf{4756.3}\pm103.4$ & $4899.8\pm139.7$ & $4902.8\pm62.9$\\
    \bottomrule
    \end{tabular}}
\vspace{-.1in}
\end{table*}

\section{The Evaluation on v2-variants}
\label{sec:v2}
The v2-variants ($R=10\sim20 m$) are more challenging. We train the policy in the multi-task setting where $R=10km$, then we do the test in v2-variants. The results and related demos are shown in \Cref{fig:hopper_v2}, \Cref{fig:walker_v2}, \Cref{fig:humanoid_v2} and \Cref{fig:cheetah_v2}. While \SWAT{} fails to perform well, \name{} has obvious advantages. With more episode timesteps, \name{} locomotes closer to the destination (a shorter distance) and gets more episode rewards.

\begin{figure}[t!]
\centering
\includegraphics[width=0.92\linewidth]{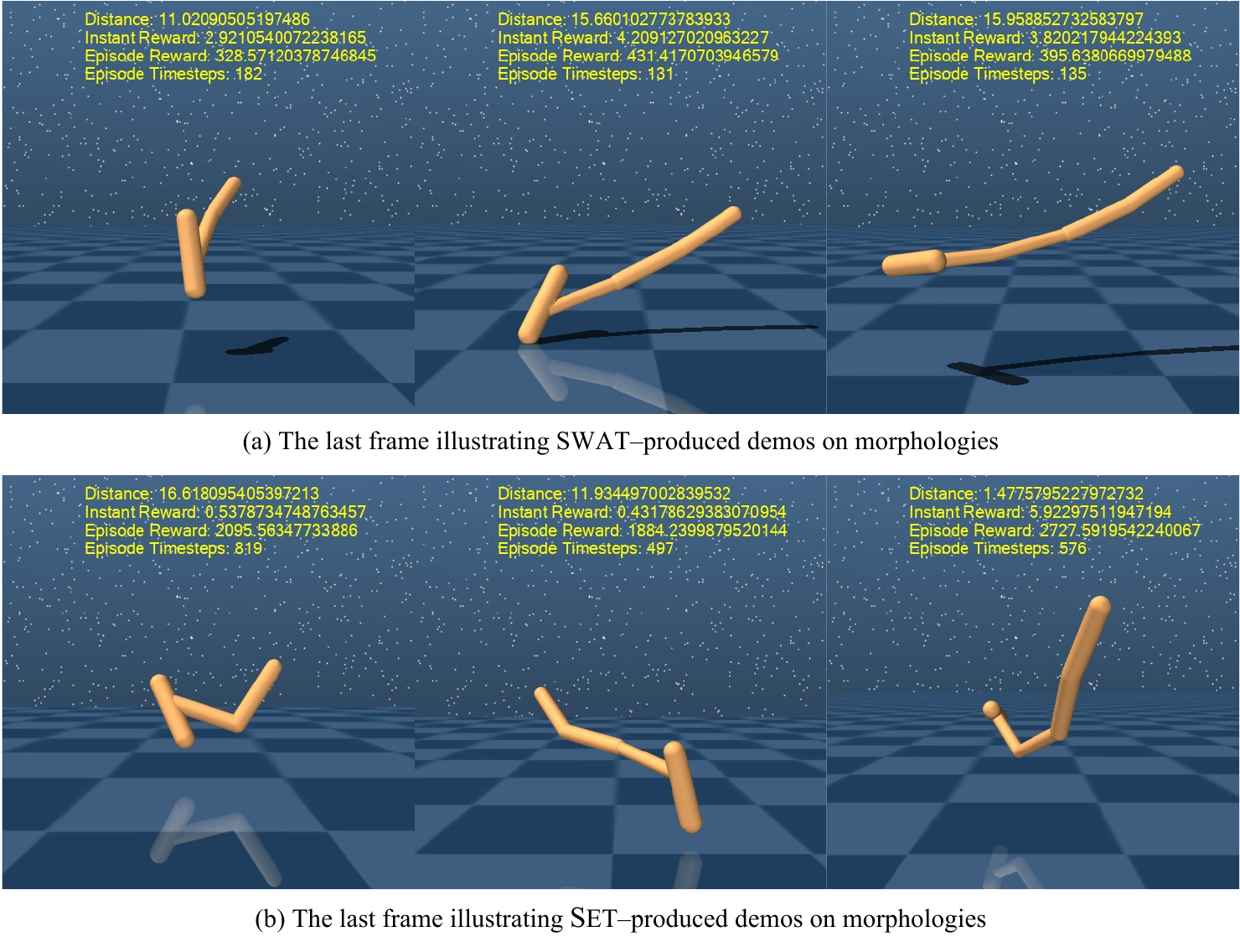}
\vspace{-.22in}
\caption{The evaluation on v2-variants on \texttt{3D\_Hopper++}.}
\label{fig:hopper_v2}
\vspace{-.05in}
\end{figure}

\begin{figure}[t!]
\centering
\includegraphics[width=0.92\linewidth]{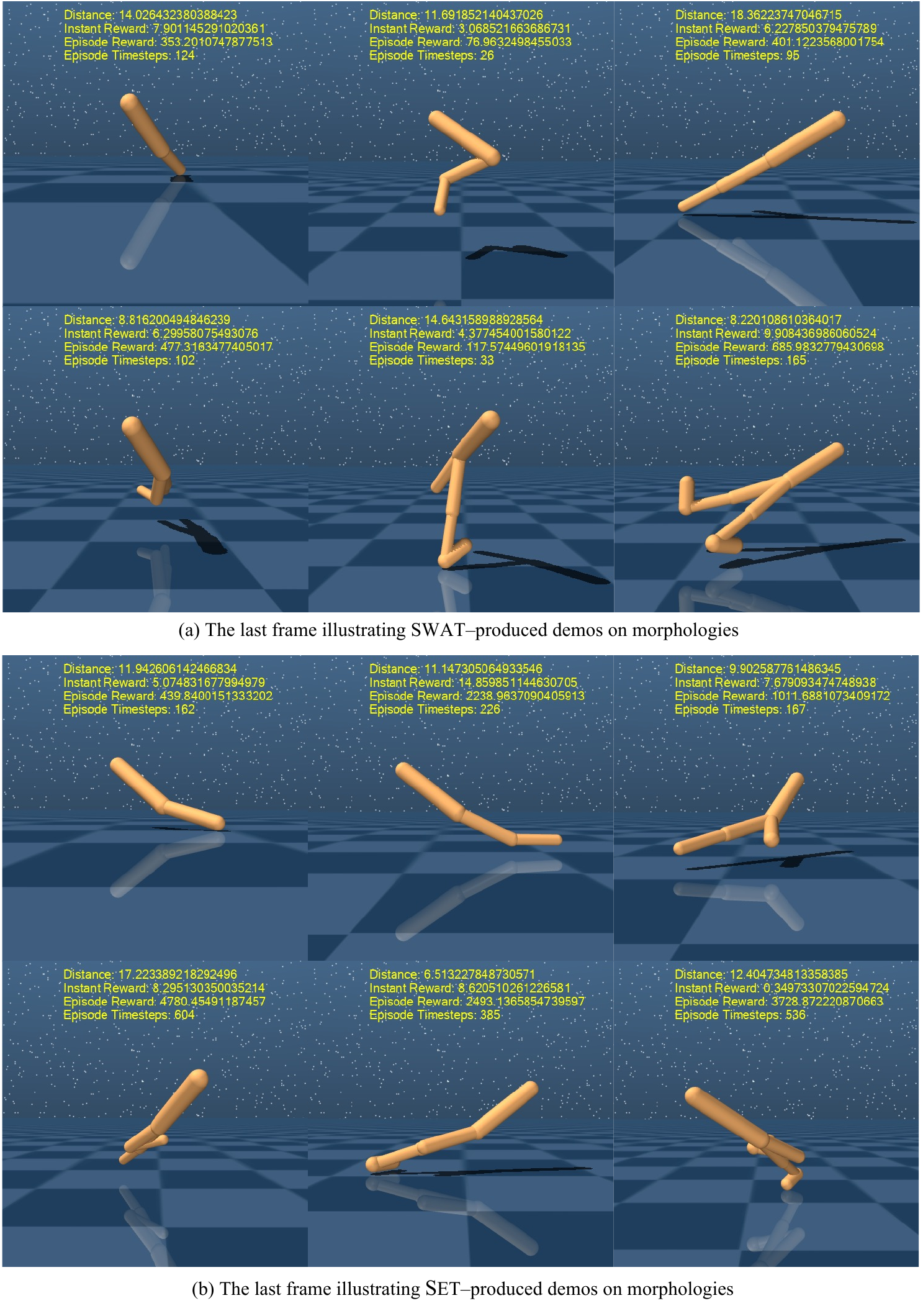}
\vspace{-.22in}
\caption{The evaluation on v2-variants on \texttt{3D\_Walker++}.}
\label{fig:walker_v2}
\vspace{-.05in}
\end{figure}

\begin{figure}[t!]
\centering
\includegraphics[width=0.92\linewidth]{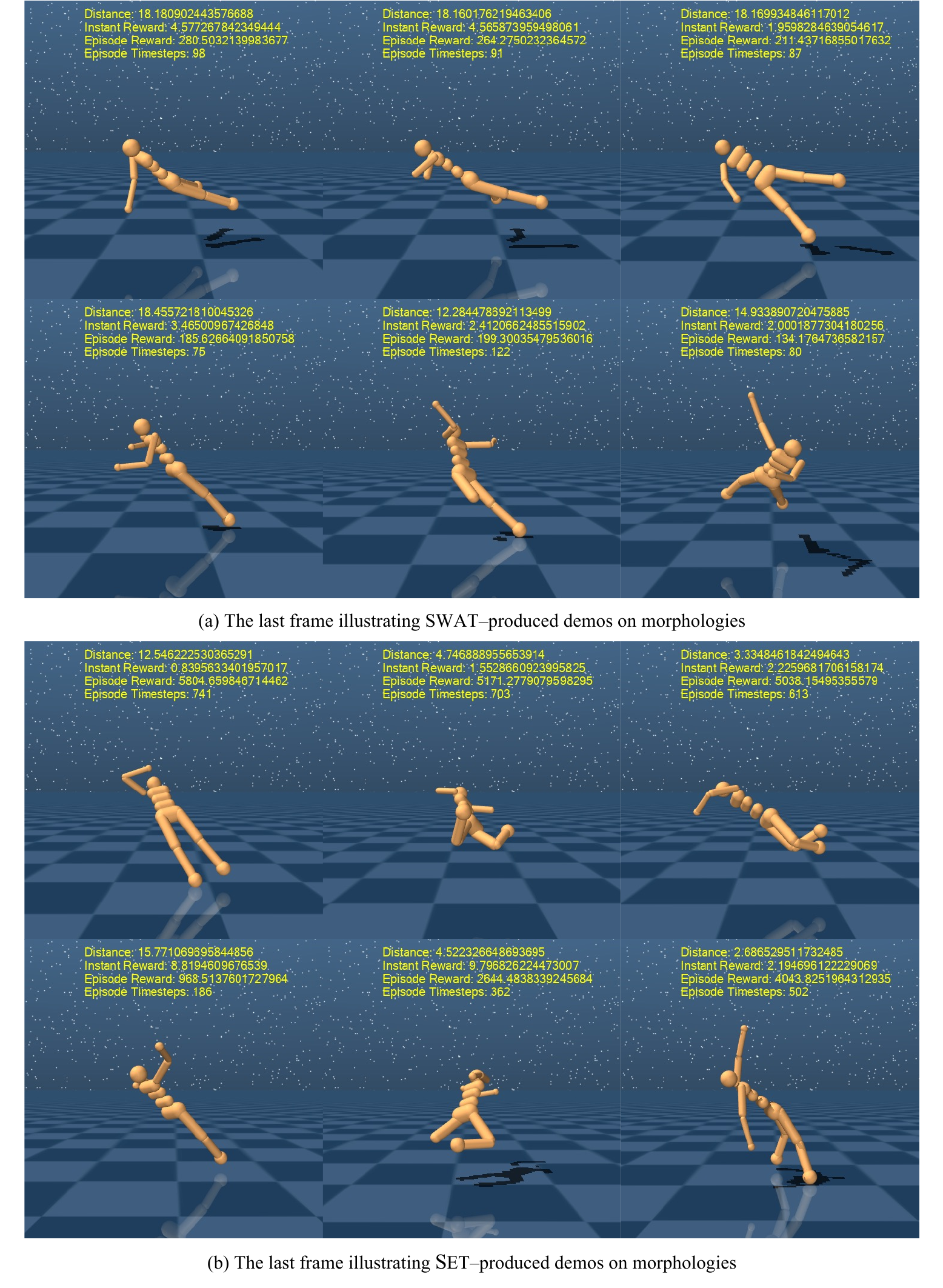}
\vspace{-.22in}
\caption{The evaluation on v2-variants on \texttt{3D\_Humanoid++}.}
\label{fig:humanoid_v2}
\vspace{-.05in}
\end{figure}

\begin{figure}[t!]
\centering
\includegraphics[width=\linewidth]{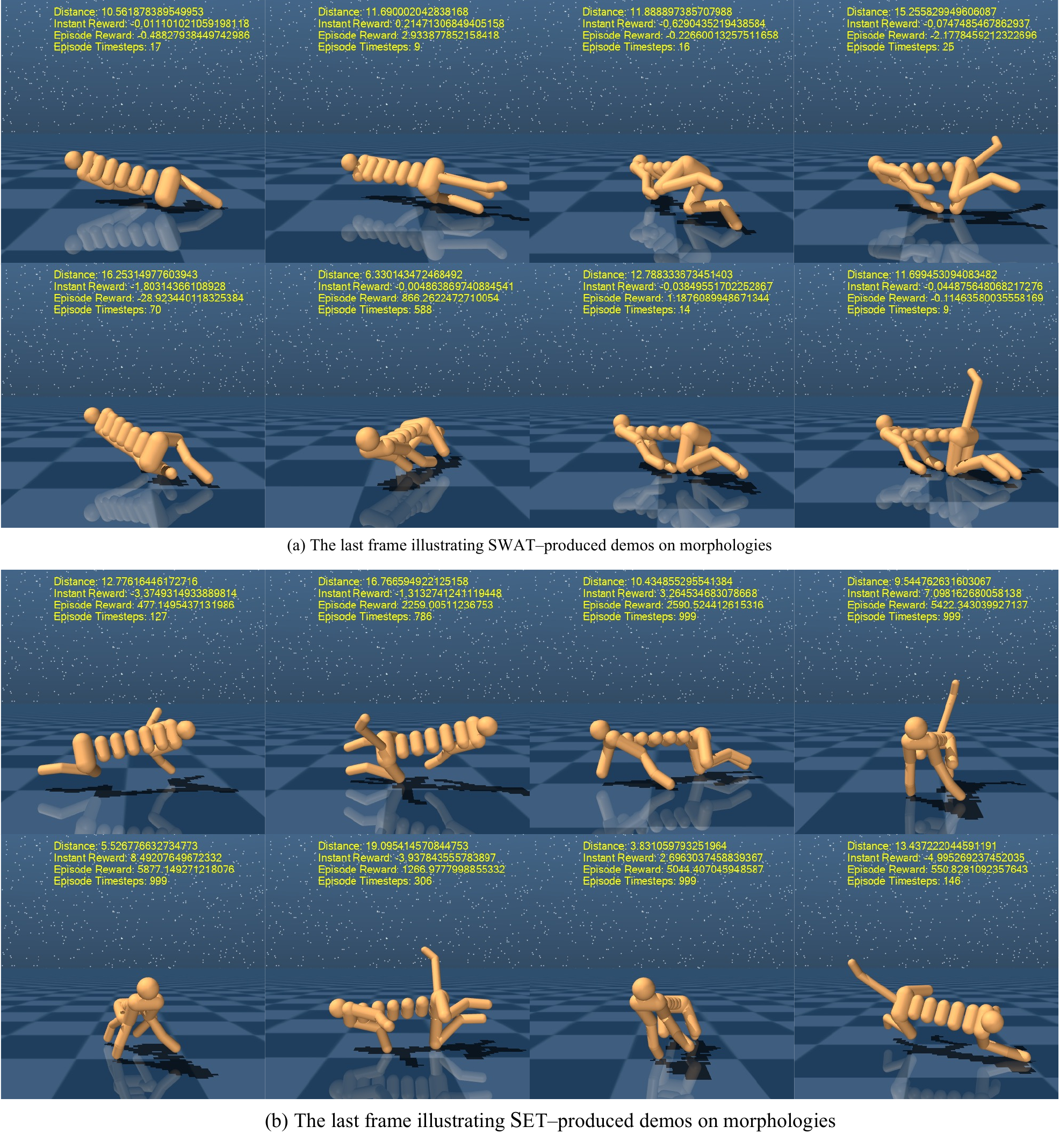}
\vspace{-.32in}
\caption{The evaluation on v2-variants on \texttt{3D\_Cheetah++}.}
\label{fig:cheetah_v2}
\vspace{-.05in}
\end{figure}

\end{document}